\documentclass{article}

\usepackage{microtype}
\usepackage{graphicx}
\usepackage{subcaption}
\usepackage{booktabs} % for professional tables

\usepackage{hyperref}

\usepackage{multirow} % added
\usepackage[normalem]{ulem} % added
\useunder{\uline}{\ul}{} % added
\usepackage{xcolor} % added
\usepackage{colortbl} % added
\usepackage[accepted]{icml2026}

\usepackage{amsmath}
\usepackage{amssymb}
\usepackage{mathtools}
\usepackage{amsthm}
\renewcommand{\algorithmicrequire}{\textbf{Input:}}
\renewcommand{\algorithmicensure}{\textbf{Return:}}

\usepackage[capitalize,noabbrev]{cleveref}

\theoremstyle{plain}
\newtheorem{example}{Example}[section]
\newtheorem{theorem}{Theorem}[section]
\newtheorem{proposition}[theorem]{Proposition}
\newtheorem{lemma}[theorem]{Lemma}

\theoremstyle{definition}
\newtheorem{definition}[theorem]{Definition}

\theoremstyle{remark}
\newtheorem{remark}[theorem]{Remark}

\usepackage[textsize=tiny]{todonotes}

\icmltitlerunning{Provably Protecting Fine-Tuned LLMs from Training Data Extraction while Preserving Utility}

\begin{document}

\twocolumn[
  \icmltitle{Provably Protecting Fine-Tuned LLMs from Training Data Extraction \\ while Preserving Utility}

  % It is OKAY to include author information, even for blind submissions: the
  % style file will automatically remove it for you unless you've provided
  % the [accepted] option to the icml2026 package.

  % List of affiliations: The first argument should be a (short) identifier you
  % will use later to specify author affiliations Academic affiliations
  % should list Department, University, City, Region, Country Industry
  % affiliations should list Company, City, Region, Country

  % You can specify symbols, otherwise they are numbered in order. Ideally, you
  % should not use this facility. Affiliations will be numbered in order of
  % appearance and this is the preferred way.
  \icmlsetsymbol{equal}{*}

  \begin{icmlauthorlist}
    \icmlauthor{Tom Segal}{BGU}
    \icmlauthor{Yuval Elovici}{BGU}
    \icmlauthor{Asaf Shabtai}{BGU}
    % \icmlauthor{Firstname6 Lastname6}{sch,yyy,comp}
    %\icmlauthor{}{sch}
    %\icmlauthor{}{sch}
  \end{icmlauthorlist}

  \icmlaffiliation{BGU}{Department of Software and Information Systems Engineering, Ben-Gurion University of the Negev, Beer Sheva, Israel}
  % \icmlaffiliation{comp}{Company Name, Location, Country}
  % \icmlaffiliation{sch}{School of ZZZ, Institute of WWW, Location, Country}

  \icmlcorrespondingauthor{Tom Segal}{tomsega@post.bgu.ac.il}
  % \icmlcorrespondingauthor{Firstname2 Lastname2}{first2.last2@www.uk}

  % You may provide any keywords that you find helpful for describing your
  % paper; these are used to populate the "keywords" metadata in the PDF but
  % will not be shown in the document
  \icmlkeywords{Machine Learning, ICML, LLMs, Large Language Models, Data Extraction, NAF, Relative Probabilities}

  \vskip 0.3in
]

% this must go after the closing bracket ] following \twocolumn[ ...

% This command actually creates the footnote in the first column listing the
% affiliations and the copyright notice. The command takes one argument, which
% is text to display at the start of the footnote. The \icmlEqualContribution
% command is standard text for equal contribution. Remove it (just {}) if you
% do not need this facility.

% Use ONE of the following lines. DO NOT remove the command.
% If you have no special notice, KEEP empty braces:
\printAffiliationsAndNotice{}  % no special notice (required even if empty)
% Or, if applicable, use the standard equal contribution text:
% \printAffiliationsAndNotice{\icmlEqualContribution}

\begin{abstract}
  Fine-tuning large language models (LLMs) on sensitive datasets raises privacy concerns, as training data extraction (TDE) attacks can expose highly confidential information.
  Existing defenses against such attacks either lack formal privacy guarantees or incur substantial utility degradation.
  We observe that fine-tuning induces widespread probability shifts, yet preserving only a small subset of influential token-level deviations is sufficient; the remaining shifts can be aggressively smoothed with minimal impact on utility.
  Motivated by this insight, we propose SCP-$\Delta_r$, a Near Access Freeness (NAF)-based algorithm that operates on relative probabilities and explicitly smooths low-impact tokens using a base model. 
  SCP-$\Delta_r$ achieves orders-of-magnitude better theoretical bounds than existing NAF based methods and provides strong empirical protection against TDE attacks with minimal performance loss.
\end{abstract}

\textbf{Code:} \url{https://github.com/ppo1/scp_dr}

\section{Introduction}

Large language models (LLMs) are known to memorize parts of their training data, enabling training data extraction (TDE) attacks that recover verbatim sequences from black-box access to a model. 
Such attacks have been demonstrated against commercial models such as ChatGPT and Gemini~\cite{carlini2021extracting,nasr2025scalable}. 
This risk is amplified by fine-tuning, which is routinely performed on small, high-value, and often private datasets~\cite{patil2024review,j2024finetuningllmenterprise}. \\
Existing defenses face a fundamental trade-off. 
Empirical mitigations such as deduplication or modified losses~\cite{pmlr-v162-kandpal22a,hans2024be} can reduce memorization in practice but offer no formal guarantees and are increasingly circumvented by new attacks~\cite{nasr2025scalable}. 
Conversely, differential privacy (DP) provides rigorous protection but often incurs substantial utility loss or computational overhead when applied to LLM fine-tuning~\cite{criticaldp,ftllmsdp}. 
As a result, there is currently no practical defense with meaningful theoretical guarantees against TDE for fine-tuned LLMs.

In this work, we bridge this gap by developing a theoretically grounded yet practical protection mechanism based on Near Access Freeness (NAF)~\cite{vyas2023provable}. 
NAF constructs a model that is provably close, under a divergence measure, to multiple constituent models, guaranteeing safety if at least one constituent is safe for a given input. 
While NAF has been applied to copyright protection, access control, and privacy~\cite{li2024purifyinglargelanguagemodels,kalai2025consensus,segal2025domba}, its implications for TDE have not been previously studied. 
In particular, we show that the widely used NAF based CP-$\Delta$ algorithm can yield weak guarantees in realistic fine-tuning settings, since memorization induces sharp probability spikes on a small number of tokens.

\paragraph{Sparse relative probability shifts.}
Our key observation is that fine-tuning concentrates the most significant changes in model outputs on a relatively small subset of tokens, even though absolute token probabilities may vary widely due to normalization. As a result, fine-tuned models can appear far apart under divergences defined over raw probabilities, yielding weak CP-$\Delta$ guarantees, despite differing meaningfully, i.e., in ways that affect the generated token, on only a small set of tokens.
We formalize this phenomenon via the Sparse Relative Probability Shift (SpaRPS) property, an idealized abstraction of this structural sparsity. We observe that fine-tuned LLMs may not naturally satisfy this property due to widespread shifts in per-token probabilities induced by fine-tuning. We conjecture that most shifts do not lead to meaningful changes in generated outputs, and thus SpaRPS can be enforced via a smoothing mechanism while preserving downstream performance.

\paragraph{SCP-$\Delta_r$: enforcing SpaRPS via relative probabilities and smoothing.}
Building on this insight, we propose SCP-$\Delta_r$, a NAF-based algorithm designed to explicitly exploit and enforce SpaRPS for TDE protection.

SCP-$\Delta_r$ combines two complementary components:
\begin{enumerate}
    \item \textbf{Relative probabilities.}
    Following \citet{segal2025domba}, we operate on relative probability
    distributions (RPDs) rather than raw probabilities. This removes global
    normalization effects and yields dramatically smaller NAF bounds when
    probability shifts are sparse in relative space, as predicted by SpaRPS.
    \item \textbf{Base-model smoothing.}
    To ensure that SpaRPS holds in practice despite widespread probability shifts,
    we introduce a smoothing mechanism that aligns most token-level relative probabilities with those of a shared base model, preserving only the most informative deviations. This explicitly enforces SpaRPS while incurring minimal utility loss.
\end{enumerate}
Together, these changes enable SCP-$\Delta_r$ to achieve orders-of-magnitude smaller NAF bounds than CP-$\Delta$.

\paragraph{From NAF guarantees to training data extraction protection.}
Crucially, NAF guarantees do not automatically imply protection against training data extraction. We show that CP-$\Delta$ bounds may allow large information gain for a TDE adversary. In contrast, we prove that under reasonable and empirically satisfied assumptions, SCP-$\Delta_r$ yields small bounds on the information gain of a TDE adversary. Attempts to derive comparable guarantees for CP-$\Delta$ result in bounds that do not meaningfully constrain TDE.

\paragraph{Empirical evaluation.}
We evaluate SCP-$\Delta_r$ against CP-$\Delta$ and other NAF-based methods using both existing and newly proposed TDE attacks.
Our results show that SCP-$\Delta_r$ achieves the strongest protection against TDE attacks, with no observable utility degradation. In contrast, we find that previously proposed NAF-based methods can be highly vulnerable to TDE attacks.

\paragraph{Contributions.}
\begin{enumerate}
    \item We introduce SpaRPS, a property capturing the idealized sparsity structure of fine-tuning-induced relative probability shifts, and show that fine-tuned models can be smoothed to satisfy this property while preserving performance.
    \item We propose SCP-$\Delta_r$, a practical NAF-based defense that operates on relative probability distributions and enforces SpaRPS via base-model smoothing, achieving strong theoretical and empirical protection against TDE with minimal utility degradation.
    \item We establish the first formal connection between NAF guarantees and training data extraction resistance, showing that existing NAF-based methods may fail to meaningfully constrain TDE, while SCP-$\Delta_r$ yields dramatically smaller extraction bounds under standard assumptions.
    \item We demonstrate the first practical extraction attacks against NAF-based methods, and show that previously proposed methods remain practically vulnerable, whereas SCP-$\Delta_r$ achieves substantially stronger empirical and theoretical resistance to these attacks.
\end{enumerate}

In the remainder of the paper, we first discuss relevant background on TDE and NAF. We then introduce the SpaRPS property and empirically study its applicability to LLMs. Next, we present SCP-$\Delta_r$ together with its smoothing mechanism for enforcing SpaRPS, and derive provable guarantees connecting CP-style algorithms to TDE. Finally, we evaluate SCP-$\Delta_r$ against existing NAF-based methods using both adaptive and non-adaptive extraction attacks.

\section{Background and Related Work}

\subsection{Preliminaries}
We use the following notation. 
Let $T$ denote a training (or fine-tuning) algorithm that maps a dataset $S$ to a conditional generative model $p(\cdot | \cdot)$. 
For an input $x$ (also referred to as a prompt) and output token $y$, $p(y | x)$ denotes the probability assigned to $y$ given $x$.
We occasionally write $p$ to denote a distribution instead of a model.
We write $\Delta(p \| q)$ for a divergence between distributions $p$ and $q$, and $D(p,q)$ for a symmetric distance; divergences need not be symmetric. All models and probability distributions are enforced to assign probability of at least $e^{-20}$ to all outputs.

\subsection{Training Data Extraction}
LLMs exhibit a phenomenon known as "eidetic memorization," where they learn to reproduce verbatim sequences from their training corpora to minimize cross-entropy loss on "long-tail" data~\cite{carlini2021extracting,tirumala2022memorization}. 
This creates significant privacy vulnerabilities, as adversaries can extract sensitive information, including Personally Identifiable Information (PII) and proprietary code, by interacting with the model's black-box API~\cite{carlini2021extracting,nasr2025scalable}. 
While safety alignment (e.g., RLHF) was initially thought to mitigate this, recent "divergence attacks" can force models to deviate from their answer style, increasing the rate of training data leakage by orders of magnitude~\cite{nasr2025scalable}.

\paragraph{Training data extraction of fine-tuned models.}
Fine-tuning introduces increased risks as it typically involves smaller, high-value datasets where the probability shift relative to the base model serves as a strong membership signal. 
Empirical studies indicate that fine-tuning can amplify PII extraction rates compared to pre-trained baselines~\cite{mireshghallah-etal-2022-empirical, akkus2025generated}. 
Crucially, parameter-efficient methods are not immune; the "LoRA-Leak" framework demonstrates that Low-Rank Adaptation (LoRA) updates still yield high membership inference success (AUC $\approx$ 0.77) by leveraging the pre-trained base model as a reference~\cite{ran2025loraleakmembershipinferenceattacks,marinelli2025leaking}.

\paragraph{Defenses against training data extraction.}
Defensive strategies are generally categorized into empirical mitigations and provable guarantees. 
In-practice methods include training data deduplication and PII scrubbing, though recent work shows that context-based attacks can still reconstruct redacted entities with high accuracy~\cite{lee2022deduplicating, lukas2023analyzing}. 
Training-time interventions like "Goldfish Loss," which randomly masks tokens during loss computation, effectively reduce verbatim memorization but lack formal privacy bounds~\cite{hans2024be}. Conversely, Differential Privacy (DP), typically implemented via DP-SGD, provides rigorous mathematical guarantees by clipping gradients and adding noise, but it frequently incurs substantial utility costs and computational overhead that hinder widespread adoption~\cite{dwork2006differential,abadi2016deep, ftllmsdp}. 
In contrast to all above approaches, we base our method on NAF and achieve both practicality and theoretically grounded protection.

\subsection{NAF and CP-$\Delta$}
\label{sub:NAF}
NAF constructs a model $p(\cdot | x)$ that remains close, under a divergence $\Delta$, to a set of constituent models $p_1(\cdot | x),\ldots,p_m(\cdot | x)$. 
The premise is that if, for a given input $x$, at least one constituent model behaves safely, then an aggregate model that is sufficiently close to all constituents should also behave safely. 
NAF has been applied to copyright protection, access control, poisoning robustness, and privacy~\cite{vyas2023provable,li2024purifyinglargelanguagemodels,kalai2025consensus,segal2025domba}. 
% Typically, models are trained on disjoint dataset partitions, ensuring that any protected sequence is absent from at least one partition. 

Definitions and theorems of this subsection follow \citet{vyas2023provable}.

\begin{definition}[$k$-NAF]
A model $p(\cdot | x)$ is \emph{$k_x$-NAF} on input $x$ with respect to models $p_1(\cdot | x),\ldots,p_m(\cdot | x)$ and divergence $\Delta$ if, for all $i \le m$,
\[
\Delta\big(p(\cdot | x)\,\|\,p_i(\cdot | x)\big) \le k_x.
\]

If $\Delta\big(p(\cdot | x)\,\|\,p_i(\cdot | x)\big) = k_x$ for all $i$, we call $p(\cdot|x)$ $k_x$-NAF-tight.
If $k_x \le k$ for all $x$, the model is called \emph{$k$-NAF}.
\label{def:naf}
\end{definition}

A common instantiation uses the max divergence $\Delta(p\|q) := \max_y \log \frac{p(y)}{q(y)}$, which implies pointwise probability control~\cite{vyas2023provable}:
\[
p(y | x) \le e^{k_x} \, p_i(y | x)
\quad
\text{for all } i,y.
\]

\paragraph{CP-$\Delta$.}
A standard NAF method for $m=2$ is CP-$\Delta$, which takes a pointwise minimum of two distributions and renormalizes:
\begin{definition}[CP-$\Delta$]
Let $p,q$ be distributions over $Y$. Define
\[
\mathrm{CP}\text{-}\Delta(p,q)(y)
:= \frac{\min(p(y),q(y))}{Z},\]

$Z := \sum_{y\in Y} \min(p(y),q(y)).$
\end{definition}

\begin{theorem}[CP-$\Delta$ bound]
\label{thm:cp_d}
For any input $x$, $\mathrm{CP}\text{-}\Delta(p(\cdot | x),q(\cdot | x))$ is $k_x$-NAF-tight with respect to $p(\cdot | x)$ and $q(\cdot | x)$ under max divergence, where
\[
k_x = D_m\big(p(\cdot | x),q(\cdot | x)\big), \ 
D_m(p,q) := \log \frac{1}{1-\mathrm{TV}(p,q)}
\]
and $\mathrm{TV}(p,q) := \tfrac12 \sum_y |p(y)-q(y)|$.
\end{theorem}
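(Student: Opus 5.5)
Fix $x$ and abbreviate $p=p(\cdot|x)$, $q=q(\cdot|x)$, and $r=\mathrm{CP}\text{-}\Delta(p,q)$. The plan has two steps. First, identify the normalizer $Z$ in terms of total variation. Second, compute the max divergence to each constituent exactly and show that the supremum over $y$ equals $\log(1/Z)$ for both $p$ and $q$, which gives both the bound and its tightness.

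\emph{Step 1: $Z=1-\mathrm{TV}(p,q)$.} For reals, $\min(a,b)=\tfrac12(a+b-|a-b|)$. Summing over $y\in Y$ and using $\sum_y p(y)=\sum_y q(y)=1$ gives
\[
Z=\sum_y \min(p(y),q(y)) = \tfrac12\Big(2-\sum_y|p(y)-q(y)|\Big)=1-\mathrm{TV}(p,q).
\]
Since all distributions are floored at $e^{-20}$, we have $Z>0$, so $r$ is well defined and $\log(1/Z)=D_m(p,q)$.

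\emph{Step 2: upper bound.} For every $y$ we have $\min(p(y),q(y))\le p(y)$. Hence $r(y)/p(y)\le 1/Z$, and so $\max_y\log\frac{r(y)}{p(y)}\le \log\frac1Z=k_x$. The same argument with $q$ in place of $p$ gives the bound with respect to $q$. This shows that $r$ is $k_x$-NAF.

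\emph{Step 3: tightness, the only point needing care.} We must exhibit, for each constituent, some $y$ attaining the ratio $1/Z$. For $p$, it suffices to find $y$ with $p(y)\le q(y)$, because then $r(y)/p(y)=1/Z$. Such a $y$ always exists: if $p(y)>q(y)$ held for all $y$, summing would give $1>1$, a contradiction. Symmetrically, some $y$ satisfies $q(y)\le p(y)$, and there $r(y)/q(y)=1/Z$. Therefore $\Delta(r\|p)=\Delta(r\|q)=\log\frac1Z=D_m(p,q)$ exactly.

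By Definition~\ref{def:naf}, $r$ is $k_x$-NAF-tight with $k_x=D_m(p(\cdot|x),q(\cdot|x))$. I expect no real obstacle beyond the existence argument in Step 3 (and implicitly finiteness of $Y$, or at least summability, which is assumed throughout).
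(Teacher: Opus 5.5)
Your proof is correct. The paper gives no proof of its own and simply defers to Vyas et al.\ (2023), whose argument is exactly your Steps 1--2: identify $Z=\sum_y\min(p(y),q(y))=1-\mathrm{TV}(p,q)$, then use $\min(p(y),q(y))\le p(y),q(y)$ pointwise. Your Step 3 also correctly supplies the tightness part (some $y$ has $p(y)\le q(y)$ and some has $q(y)\le p(y)$, so the ratio $1/Z$ is attained for both constituents); the paper asserts this without proof and later relies on the analogous CP-$\Delta_r$ tightness in the proof of Theorem~\ref{theorem:scp_tde_tight}.
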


\begin{remark}
The quantity $D_m(p,q)$ can be numerically large: $\mathrm{TV}(p,q)$ may approach $1$ even when $p$ and $q$ differ substantially on only a small subset of tokens, yielding guarantees that are not meaningful in practice~\cite{cohen2025blamelessuserscleanroom}.
\end{remark}

To illustrate how memorization-induced probability spikes affect CP-style bounds, consider the following example:

\begin{example}
Let $p$ and $q$ be probability distributions over $Y=\{0,\dots,9\}$ such that
$q(y)=0.1$ for all $y\in Y$, while $p(0)=0.91$ and $p(y)=0.01$ for all $y\neq 0$ (See Table~\ref{tab:example}).
\label{p_q_example}
\end{example}

\begin{table}[b]
\caption{Example distributions $p$ and $q$ and their CP-style aggregations.
Applying CP to raw probabilities (P) yields a highly imbalanced distribution compared to applying CP to relative probabilities (R). Bold columns indicate the probability representation on which each aggregation operates.
}
\begin{small}
\centering
\begin{tabular}{l|cc|cc|cc|cc|}
\cmidrule[\heavyrulewidth]{2-9}
                           & \multicolumn{2}{c|}{$p$}                       & \multicolumn{2}{c|}{$q$}                       & \multicolumn{2}{c|}{CP-$\Delta$}               & \multicolumn{2}{c|}{CP-$\Delta_r$}             \\
                           & \multicolumn{1}{c}{P} & \multicolumn{1}{c|}{R} & \multicolumn{1}{c}{P} & \multicolumn{1}{c|}{R} & \multicolumn{1}{c}{\textbf{P}} & \multicolumn{1}{c|}{R} & \multicolumn{1}{c}{P} & \multicolumn{1}{c|}{\textbf{R}} \\ 
                           \cmidrule(lr){1-1} \cmidrule(lr){2-3} \cmidrule(lr){4-5}
                           \cmidrule(lr){6-7} \cmidrule(lr){8-9}
\multicolumn{1}{|l|}{0}    & 0.91                  & 58                     & 0.1                   & 1                      & 0.53                  & 7.9                    & 0.15                  & 1.5                    \\
\multicolumn{1}{|l|}{1--9} & 0.01                  & 0.64                   & 0.1                   & 1                      & 0.05                  & 0.79                   & 0.09                  & 0.96                   \\ \bottomrule
\end{tabular}
\end{small}
\label{tab:example}
\end{table}

The distributions in Example~\ref{p_q_example} illustrate a common memorization pattern in fine-tuned LLMs.
Here, $p$ concentrates most of its mass on a single memorized token, while $q$ remains balanced.
As a result, the pointwise minimum $\min(p(y),q(y))$ is small for most $y$, driving $\mathrm{TV}(p,q)$ close to $1$ and yielding a large CP-$\Delta$ bound.
This behavior shows how sparse probability spikes, which are characteristic of memorization, can cause CP-style bounds based on raw probabilities to become numerically large.

\subsection{Relative Probabilities and CP-$\Delta_r$}
To reduce sensitivity to probability spikes, we follow~\citep{segal2025domba} and operate on \emph{relative probability distributions} (RPDs). 
While~\citep{segal2025domba} did not formulate their method within the NAF framework, it can be viewed as applying the CP aggregation rule to relative probabilities rather than to raw probabilities; we refer to this variant as CP-$\Delta_r$. Definitions and theorems of this subsection follow \citet{segal2025domba}, rephrased in NAF notation.

\begin{definition}[RPD]
Let $q$ be a distribution over $Y$ and let $n := |Y|$. Define the \emph{typical probability} $tq$ and the corresponding \emph{relative probability distribution} (RPD) $rq$ by
\[
tq := \exp\!\Big(\frac{1}{n}\sum_{y\in Y}\log q(y)\Big),\ rq(y) := \frac{q(y)}{tq}.
\]
\end{definition}

Conceptually, raw probabilities are sensitive to global normalization effects. Relative probabilities instead isolate changes in the model’s relative token preferences, which more directly reflect meaningful behavioral differences between models.
Mathematically, RPDs do not sum to $1$; instead, they satisfy $\sum_y \log rq(y)=0$, reducing sensitivity to probability spikes and often yielding smaller bounds in practice~\cite{segal2025domba}. 
This construction addresses the failure mode illustrated in Example~\ref{p_q_example}, as illustrated in Table~\ref{tab:example}.
\\
\begin{definition}[CP-$\Delta_r$]
Let $p,q$ be distributions over $Y$. Define
\[
\mathrm{CP}\text{-}\Delta_r(p,q)(y)
:= \frac{\min(rp(y),rq(y))}{Z},
\]
where $Z := \sum_{y\in Y} \min(rp(y),rq(y))$.
\end{definition}

\begin{theorem}[CP-$\Delta_r$ bound]
\label{thm:cp_dr}
For any input $x$, $\mathrm{CP}\text{-}\Delta_r(p(\cdot | x),q(\cdot | x))$ is $k_x$-NAF-tight with respect to $p(\cdot | x)$ and $q(\cdot | x)$ under the relative max-divergence
$\Delta_r(p\|q) := \max_y \log \frac{rp(y)}{rq(y)},$
with
\[
k_x = D_r\big(p(\cdot | x),q(\cdot | x)\big), \
D_r(p,q) := \frac{1}{2n}\sum_y \Big|\log \frac{rp(y)}{rq(y)}\Big|.
\]
\end{theorem}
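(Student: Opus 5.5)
The plan is to work entirely in log-relative space, where both the CP-$\Delta_r$ aggregation and the relative max-divergence become simple. Fix $x$ and write $a(y) := \log rp(y)$ and $b(y) := \log rq(y)$. By the definition of the RPD, $\sum_y a(y) = \sum_y b(y) = 0$. Let $m(y) := \min(a(y),b(y))$, and let $s := \mathrm{CP}\text{-}\Delta_r(p,q)$, so that $s(y) = e^{m(y)}/Z$.

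\textbf{Step 1: compute the RPD of $s$.} Since $\log s(y) = m(y) - \log Z$, the typical probability satisfies $\log ts = \frac1n\sum_y m(y) - \log Z$. Therefore
\[
\log rs(y) = m(y) - \bar m, \qquad \bar m := \frac1n\sum_{y} m(y).
\]
In particular, the normalizer $Z$ cancels. This cancellation is the key structural fact, because it means renormalization does not affect relative probabilities.

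\textbf{Step 2: evaluate the divergence to $p$.} We have
\[
\log\frac{rs(y)}{rp(y)} = \big(m(y)-a(y)\big) - \bar m .
\]
The term $m(y)-a(y)$ is always $\le 0$, with equality exactly when $a(y)\le b(y)$. Such a $y$ must exist: otherwise $a>b$ everywhere, which contradicts $\sum_y a = \sum_y b$. Hence
\[
\Delta_r(s\|p) = -\bar m .
\]
The same argument with the roles of $a$ and $b$ swapped gives $\Delta_r(s\|q) = -\bar m$. So the two divergences are equal, which gives tightness.

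\textbf{Step 3: identify $-\bar m$ with $D_r$.} Use the identity $\min(a,b) = \tfrac12(a+b-|a-b|)$ and sum over $y$. The terms $\sum_y a$ and $\sum_y b$ vanish, leaving
\[
-\bar m = \frac{1}{2n}\sum_y |a(y)-b(y)| = D_r(p,q).
\]

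The only mildly delicate point is the existence argument in Step 2, which guarantees that the maximum actually attains $-\bar m$ rather than only bounding it. This is exactly where the zero-sum normalization of RPDs is used. The probability floor $e^{-20}$ only ensures that all logarithms are finite.
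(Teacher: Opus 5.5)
Your proof is correct and complete. Step~1's cancellation of $Z$, the existence argument in Step~2 (which uses $\sum_y a(y)=\sum_y b(y)=0$ to guarantee the maximum of $\min(0,b(y)-a(y))$, and symmetrically of $\min(0,a(y)-b(y))$, is attained at $0$), and the identity $\min(a,b)=\tfrac12(a+b-|a-b|)$ in Step~3 together give $\Delta_r(s\|p)=\Delta_r(s\|q)=D_r(p,q)$. That is exactly $k_x$-NAF-tightness.

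The paper does not prove this theorem. It imports the result from \citet{segal2025domba}, rephrased in NAF notation, so there is no in-paper argument to compare against. Your direct log-space computation works as a self-contained proof.

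Step~1 also gives a stronger pointwise identity: $rs(y)=e^{D_r(p,q)}\min(rp(y),rq(y))$ for every $y$, equivalently $Z\cdot ts=e^{-D_r(p,q)}$. This is precisely the fact $Z_i=e^{-D_r(p_i,q)}/tr_i$ used in the proof of Theorem~\ref{theorem:scp_tde_tight}. There the paper gets it by appealing to ``the tight form'' of this theorem. Your computation yields the constant $1/(Z\,ts)=e^{-\bar m}$ directly, so it also cleanly justifies that later step.
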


As discussed in Section~\ref{sec:base_model_smooth}, the bound in Theorem~\ref{thm:cp_dr} is dramatically smaller in practice than the bound in Theorem~\ref{thm:cp_d} (Figure~\ref{fig:kx_hist}).

\subsection{Other NAF Variations}
CP-style algorithms have also been analyzed under alternative divergences such as KL~\cite{vyas2023provable}. 
Recent work observed that sequence generation under CP-$\Delta_{\mathrm{KL}}$ may drift toward one constituent model, compromising safety, and proposed fusing strategies to mitigate this effect~\cite{abad2025copyrightprotected}. 
As discussed in subsection~\ref{subsec:tde_naf}, such fusing does not address our adaptive extraction attacks.

\section{The SpaRPS Property}
In this section, we introduce the SpaRPS property and discuss its applicability to fine-tuned LLMs. First, we give theoretical justification for using relative probabilities, which SpaRPS operates on.

\subsection{Why Relative Probabilities}
Table~\ref{tab:example} illustrates a key failure mode of divergences defined on raw probabilities: even if the difference between two distributions is caused by a single spike, normalization effects can make $p(y)$ and $q(y)$ far apart for all $y$.

Concretely, suppose $p$ and $q$ arise from unnormalized ``scores'' that differ by an unknown multiplicative constant $C$. A natural way to estimate this constant is to choose $C$ that best aligns $p$ and $q$ in log-space on average. The next theorem shows that, under squared error in log-space, the optimal scaling is exactly the ratio of typical
probabilities, which is equivalent to using RPDs.

\begin{theorem}
\label{theorem:rp_square_log}
Let $p$ and $q$ be distributions over $Y$ with $p(y),q(y)>0$ for all $y\in Y$.
Then
\[
\arg\min_{C>0}\ \sum_{y\in Y} \log^2\!\left(C\frac{p(y)}{q(y)}\right)
= \frac{tq}{tp}.
\]
\end{theorem}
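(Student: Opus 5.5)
The plan is to reparametrize by $c := \log C \in \mathbb{R}$ and reduce the claim to minimizing a one-dimensional convex quadratic. Writing $a_y := \log p(y) - \log q(y)$, which is well defined because $p(y),q(y)>0$, the objective becomes
\[
f(c) := \sum_{y\in Y} (c + a_y)^2 .
\]
Since $C \mapsto \log C$ is a bijection from $(0,\infty)$ onto $\mathbb{R}$, minimizing over $C>0$ is equivalent to minimizing $f$ over $c\in\mathbb{R}$. We then map the minimizer back via $C = e^{c}$.

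Next, $f$ is a quadratic in $c$ with leading coefficient $n = |Y| > 0$. It is therefore strictly convex, which guarantees that the $\arg\min$ is a single point. We find it from the first-order condition $f'(c) = 2\sum_y (c + a_y) = 0$, which gives
\[
c^\star = -\frac{1}{n}\sum_y a_y = \frac{1}{n}\sum_y \log q(y) - \frac{1}{n}\sum_y \log p(y).
\]
By the definition of the typical probability in the RPD definition, these two averages are exactly $\log tq$ and $\log tp$. Hence $c^\star = \log tq - \log tp$, and therefore $C^\star = e^{c^\star} = tq/tp$.

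To close the argument, we note that with this choice $C^\star p(y)/q(y) = rp(y)/rq(y)$. This confirms the stated interpretation that the optimal alignment is the same as comparing RPDs. There is no real obstacle in this proof. The only points needing care are the justification for the change of variables (the map is a monotone bijection and all the logarithms are finite) and the uniqueness of the minimizer, which strict convexity gives directly.
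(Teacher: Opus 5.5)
Your proposal is correct and is the paper's proof: it also sets $a_y=\log\frac{p(y)}{q(y)}$, minimizes $\sum_y(\log C+a_y)^2$ by least squares to get $\log C=-\frac{1}{n}\sum_y a_y=\log\frac{tq}{tp}$, and exponentiates. The only difference is that you spell out what the paper leaves implicit, namely that $C\mapsto\log C$ is a bijection onto $\mathbb{R}$ and that strict convexity makes the minimizer unique.
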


The proof follows from a standard least-squares argument and is provided in Appendix~\ref{app:proof-thm-rp_square_log}.

\subsection{SpaRPS}
We are now ready to define the SpaRPS (Sparse Relative Probability Shift) property.

\begin{definition}[SpaRPS]
\label{def:sparps}
Let $p$ and $q$ be distributions over $Y$. We say that $p$ is \emph{$m$-SpaRPS with respect to $q$}
if there exists a set $A\subseteq Y$ with $|A| = m$ and $\alpha>0$ such that
\[
p(y) = \alpha\, q(y)\qquad\forall y\in Y\setminus A.
\]
\end{definition}

We emphasize that SpaRPS is intended as an idealized structural abstraction rather than a literal property expected to hold exactly for real fine-tuned LLMs without intervention.

\begin{proposition}
\label{prop:scp_dr_bound}
Let $n:=|Y|$. If $p$ is $m$-SpaRPS with respect to $q$ and
$\forall y\in A:\left|\log \frac{p(y)}{q(y)}\right|\le \log M$,
then
\[
D_r(p,q) \le \frac{m}{n}\bigl(\log M + |\log \alpha|\bigr).
\]
For $m = 20$ and $n = 32000$ (as used in our evaluation), and assuming
$\log M + |\log \alpha| \le 40$
(which follows from all probabilities being lower bounded by $e^{-20}$),
we obtain $D_r(p,q) \le 0.025$.
A proof is provided in Appendix~\ref{app:proof-prop-scp_dr_bound}.
\end{proposition}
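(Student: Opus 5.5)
The approach is to compute $\log \frac{rp(y)}{rq(y)}$ explicitly in terms of the pointwise log-ratio and exploit the fact that this log-ratio is constant off $A$. Write $\ell(y) := \log \frac{p(y)}{q(y)}$. By the definition of RPDs, $\log rp(y) = \log p(y) - \frac1n\sum_{y'}\log p(y')$, and similarly for $q$. Hence
\[
\log \frac{rp(y)}{rq(y)} = \ell(y) - c, \qquad c := \frac1n \sum_{y'\in Y} \ell(y').
\]
By the SpaRPS assumption (Definition~\ref{def:sparps}), $\ell(y) = \log\alpha$ for every $y\notin A$. I would therefore center everything at $\log\alpha$. Set $d(y) := \ell(y)-\log\alpha$, which vanishes off $A$, and set $S := \sum_{y\in A} |d(y)|$. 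Then
\[
c = \log\alpha + \frac1n\sum_{y\in A} d(y), \qquad \ell(y)-c = d(y) - \frac1n\sum_{y'\in A} d(y').
\]

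Next, split the sum in $D_r(p,q) = \frac{1}{2n}\sum_y |\ell(y)-c|$ into the parts over $Y\setminus A$ and over $A$.
\begin{itemize}
\item For each of the $n-m$ tokens outside $A$, the summand equals $\bigl|\frac1n\sum_{A} d\bigr| \le S/n$. The total contribution of these tokens is therefore at most $\frac{n-m}{n}S$.
\item For $y\in A$, the triangle inequality gives $|\ell(y)-c| \le |d(y)| + S/n$. Summing over the $m$ tokens of $A$ gives at most $S + \frac{m}{n}S$.
\end{itemize}
Adding the two parts, $\sum_y|\ell(y)-c| \le \bigl(1+\frac{n-m}{n}+\frac mn\bigr)S = 2S$, and so $D_r(p,q)\le S/n$.

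Finally, for $y\in A$ the hypothesis $|\ell(y)|\le\log M$ gives $|d(y)| \le |\ell(y)| + |\log\alpha| \le \log M + |\log\alpha|$. Hence $S \le m(\log M+|\log\alpha|)$, which yields the claimed bound $D_r(p,q)\le \frac mn(\log M+|\log\alpha|)$.

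For the numerical instance, all probabilities lie in $[e^{-20},1]$, so every $|\ell(y)|\le 20$; in particular $\log M \le 20$ and $|\log\alpha| = |\ell(y)|\le 20$ for any $y\notin A$. Thus $\log M + |\log\alpha| \le 40$, and the bound gives $\frac{20\cdot 40}{32000} = 0.025$.

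There is no real obstacle in this argument. The one point needing care is the bookkeeping of the mean shift $c$: the tokens outside $A$ are not zero in relative space, since each is offset by $\frac1n\sum_A d$. Their total contribution is nonetheless controlled because it carries the factor $\frac{n-m}{n}\le 1$. Centering at $\log\alpha$ rather than at $0$ is what makes these terms collapse cleanly into the factor $2S$.
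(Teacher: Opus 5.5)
Your proof is correct and is essentially the paper's argument: your $d(y)$ is exactly the paper's $\log\frac{p(y)}{\alpha q(y)}$, and your common shift $-\frac{1}{n}\sum_{y\in A} d(y)$ is the paper's $\log\frac{\alpha\, tq}{tp}$. Splitting the triangle inequality over $A$ and $Y\setminus A$ gives the same total $\sum_{y}\bigl|\log\frac{rp(y)}{rq(y)}\bigr| \le 2\sum_{y\in A}|d(y)|$ that the paper obtains by applying it globally, and your justification of $\log M + |\log\alpha| \le 40$ (bounding $|\log\alpha|$ via a token outside $A$) spells out what the paper only asserts.
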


By Proposition~\ref{prop:scp_dr_bound} and Theorem~\ref{thm:cp_dr}, fine-tuned LLMs that satisfy SpaRPS are close in terms of relative probability distributions and therefore provide a small CP-$\Delta_r$ bound. However, due to widespread probability shifts induce by fine-tuning, it is unlikely in practice that $p(\cdot| x)$ is exactly $m$-SpaRPS with respect to $q(\cdot| x)$. In particular, the ratio $\frac{p(y| x)}{q(y| x)}$ is unlikely to equal a single constant $\alpha$ for all but $m$ outputs simultaneously. We therefore introduce \emph{Expected SpaRPS} as a more realistic relaxation.

\begin{definition}[Expected SpaRPS]
\label{def:exp-sparps}
Let $T$ be a randomized training algorithm and let $S_p,S_q$ be datasets.
For each prompt $x\in X$, define the random conditional models
$p(\cdot| x):=T(S_p)(x)$ and $q(\cdot| x):=T(S_q)(x)$, where the randomness
is over $T$. Let $n:=|Y|$.

We say that $(S_p,S_q)$ satisfies \emph{$(m,B,C)$-Expected SpaRPS on $X$} if there
exists a subset $X'\subseteq X$ with $|X'|\ge C|X|$ such that for every $x\in X'$
there exists a set $A_x\subseteq Y$ with $|A_x|= m$ satisfying
\[
\left|
\mathbb{E}\!\left[\log rp(y| x)\right] - \mathbb{E}\!\left[\log rq(y| x)\right]
\right|\le B
\qquad \forall y\in Y\setminus A_x.
\]
\end{definition}

\begin{figure}[b]
  \centering
  \begin{subfigure}{0.48\linewidth}
    \centering
    \includegraphics[width=\linewidth]{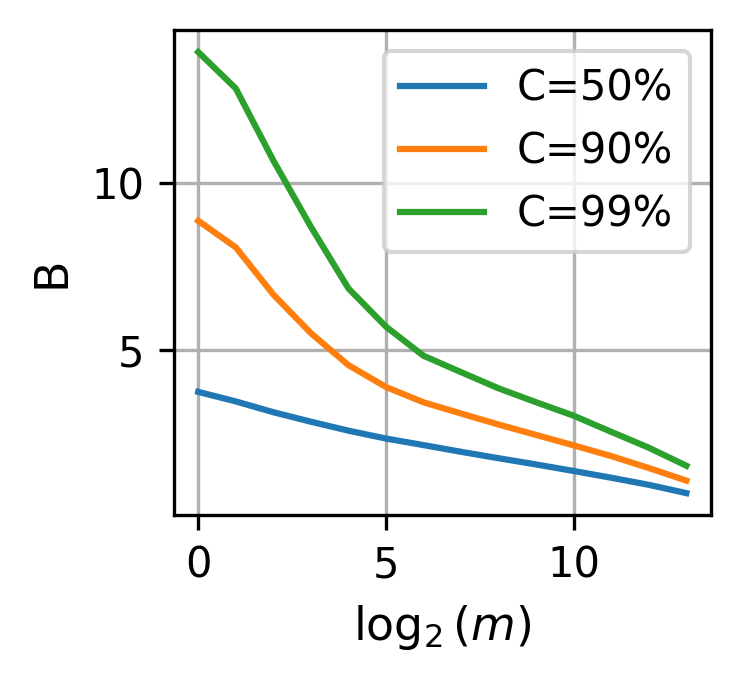}
    \caption{Expected SpaRPS analysis}
    \label{fig:exp_sparps}
  \end{subfigure}
  \hfill
  \begin{subfigure}{0.48\linewidth}
    \centering
    \includegraphics[width=\linewidth]{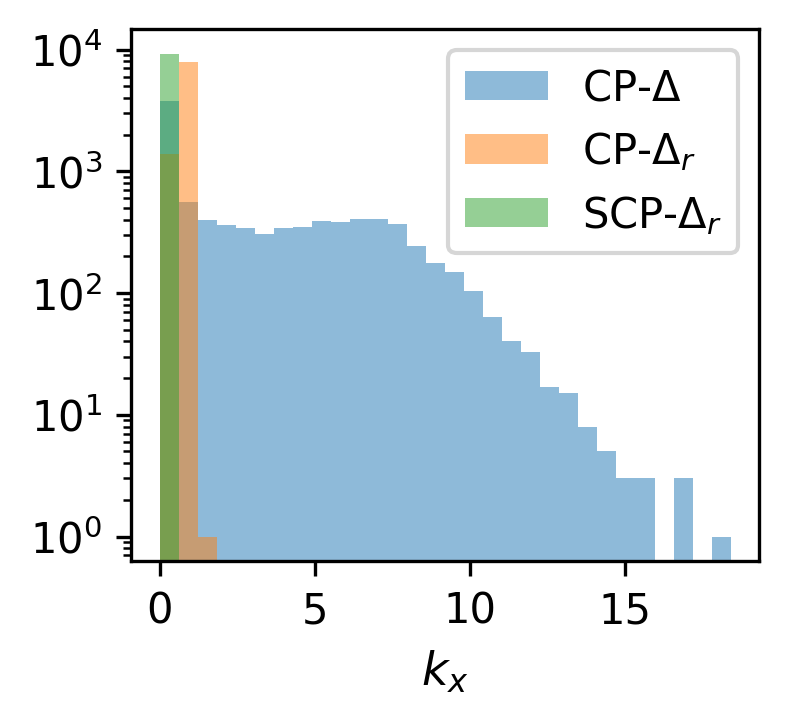}
    \caption{$k_x$ histograms}
    \label{fig:kx_hist}
  \end{subfigure}
  \caption{Expected SpaRPS and $k_x$ evaluated using the \emph{MathAbstracts} dataset (See~\ref{sub:experimental}). (a) Expected SpaRPS $B$ as a function of $\log_2(m)$ for different $C$ values. (b) $k_x$ bounds log-scale histograms for different CP style algorithms.}
\end{figure}

Expected SpaRPS can be viewed as a PAC-style relaxation of SpaRPS, in which the proportionality condition is allowed to hold approximately (via $B$) and for most prompts (via $C$), under the randomness of the training algorithm. Ideally, $B \approx 0$ and $C \approx 1$.

Figure~\ref{fig:exp_sparps} illustrates the relationship between the Expected SpaRPS parameters $m$, $B$, and $C$ in a practical fine-tuning setting.
The datasets $S_p$ and $S_q$ contain $3000$ and $2900$ training examples respectively, with $S_q \subset S_p$, and the SpaRPS analysis evaluates prompts $X = S_p \setminus S_q$.
We estimate $\mathbb{E}[\log rp(y\mid x)]$ and $\mathbb{E}[\log rq(y\mid x)]$ by averaging log relative probabilities over 16 independent fine-tunes (different seeds) of LLaMA2-7B on the \emph{MathAbstracts} dataset (see Section~\ref{sub:experimental}). We find that even Expected SpaRPS does not hold in practice. As shown in Figure~\ref{fig:exp_sparps}, even when allowing $(m = )1024$ unconstrained shifts,  $(1 - C=)10\%$ of prompts still exhibit large deviation ($B \approx 2.1$). This suggests that SpaRPS must be explicitly enforced if it is to serve as the basis for practical TDE guarantees.

\section{SCP-$\Delta_r$}
We observed that SpaRPS and Expected SpaRPS do not realistically hold in practice. We conjecture, however, that LLMs behave as though they satisfy SpaRPS, in the sense that retaining only a small number of relative probability shifts would not meaningfully affect model behavior.
To enforce SpaRPS and enable its use in our theoretical analysis, we introduce the \emph{S} (smoothing) component of SCP-$\Delta_r$, which enforces the stronger, non-expected form of SpaRPS (Definition~\ref{def:sparps}) by replacing most relative probabilities with those of a shared base model $b$ prior to aggregation.
Our experiments (Section~\ref{subsec:results}) validate our conjecture by showing that the smoothing mechanism of SCP-$\Delta_r$ does not degrade utility.

\subsection{Base-Model Smoothing}
\label{sec:base_model_smooth}
Let $b(\cdot| x)$ denote a fixed \emph{base} model (e.g., the pre-fine-tuning model).
Given an RPD $rp$ and a base RPD $rb$, we define smoothing with parameter $m$ via
$\mathrm{S}_m(rp,rb)$, which preserves the relative probabilities of $m$ selected tokens
and sets all remaining tokens to match the base model, up to a global rescaling.

Formally, let $I\subseteq Y$ be a token set of size $m$ (chosen by a scoring rule defined below).
Define $\mathrm{S}_m(rp,rb)$ by
\[
\mathrm{S}_m(rp,rb)(y)=
\begin{cases}
\beta\, rp(y), & y\in I,\\
\beta\, rb(y), & y\notin I,
\end{cases}
\]
where $\beta>0$ is chosen s.t. $\sum_{y\in Y}\log \mathrm{S}_m(rp,rb)(y)=0$
(i.e., the result is an RPD). We use $\widetilde{rp}$ to denote a smoothed RPD.
Note that for small $m$, $\beta$ is close to $1$ (see Appendix~\ref{app:proof-prop-scp-improved-bound}).

We choose $I$ as the $m$ tokens with largest score
\[
\mathrm{score}_{p,b}(y) := p(y| x)\cdot\log\frac{rp(y| x)}{rb(y| x)},
\]
This score is the per-token contribution to the relative KL divergence:
$\Delta_{r\mathrm{KL}}(p(\cdot| x)\,\|\,b(\cdot| x)) = \sum_{y\in Y} \mathrm{score}_{p,b}(y)$.
Selecting the top-$m$ tokens by $\mathrm{score}_{p,b}$ bounds the potential utility degradation caused by smoothing (see Appendix~\ref{app:scp_util_bound}).

\begin{remark}
    S$_m(rp, rb)$ is $2m$-SpaRPS with respect to S$_m(rq, rb)$ for any RPDs $rp, rq$ and $rb$. A proof is provided in Appendix~\ref{app:proof-rem-2k_sparps}.
    \label{remark:2k_sparps}
\end{remark}

Algorithm~\ref{alg:SCP} describes SCP-$\Delta_r$ with a smoothing parameter $m$. The resulting model is equivalent to CP-$\Delta_r(\widetilde{p}, \widetilde{q})$, where $\widetilde{p}$ and $\widetilde{q}$ are the probability distributions induced by the relative probability distributions $\widetilde{rp}$ and $\widetilde{rq}$. As a result, any theoretical guarantees for CP-$\Delta_r$ apply directly to SCP-$\Delta_r$.
Figure~\ref{fig:kx_hist} shows that the resulting $k_x$ values for SCP-$\Delta_r$ are substantially smaller than those of CP-$\Delta_r$, which in turn are smaller than those of CP-$\Delta$.
At the $99^{\text{th}}$ percentile, the corresponding values are $0.004$, $0.98$, and $11.6$, respectively, reflecting orders-of-magnitude improvements when moving from CP-$\Delta$ to CP-$\Delta_r$, and from CP-$\Delta_r$ to SCP-$\Delta_r$.
Note that CP-$\Delta$ bounds $k_x$ under the standard max-divergence on raw probabilities, whereas CP-$\Delta_r$ and SCP-$\Delta_r$ use relative max-divergence; we compare these quantities as they play the same role in the NAF guarantee.

\begin{algorithm}[t]
\caption{SCP-$\Delta_r$ (for a fixed input)}
\label{alg:SCP}
\begin{algorithmic}[1]
\REQUIRE Distributions $b$, $p$, $q$ and smoothing level $m$.
\ENSURE A distribution $r$.
\STATE $rb \gets \mathrm{RPD}(b)$; $rp \gets \mathrm{RPD}(p)$; $rq \gets \mathrm{RPD}(q)$.
\STATE $\widetilde{rp} \gets \mathrm{S}_m(rp, rb)$; \ $\widetilde{rq} \gets \mathrm{S}_m(rq, rb)$.
\STATE $u(y) \gets \min(\widetilde{rp}(y), \widetilde{rq}(y))$ for all $y\in Y$.
\STATE \textbf{return} $r(y) \gets u(y)\big/\sum_{y'\in Y}u(y')$.
\end{algorithmic}
\end{algorithm}

Importantly, achieving these improvements does not incur substantial computational overhead: Training cost scales approximately linearly with the amount of training data, which is partitioned across the two constituent models. During inference, the three forward passes are independent and can therefore be executed in parallel on a single GPU. Moreover, the aggregation operations are lightweight relative to the forward passes themselves, and based on prior empirical evaluations of CP-style algorithms, the resulting wall-clock overhead is expected to be modest in practice~\cite{abad2025copyrightprotected}.

\subsection{SCP-$\Delta_r$ TDE Protection}
\label{subsec:tde_prot}
We now discuss SCP-$\Delta_r$ protection against TDE.
Theorems~\ref{theorem:tde_q} and~\ref{theorem:scp_tde_tight} below are stated for CP-$\Delta_r$, but apply directly to SCP-$\Delta_r$ by replacing relative probability distributions with their smoothed counterparts, i.e., $rq \rightarrow \widetilde{rq}$.

\paragraph{TDE framework.}
Typically, TDE attacks consist of two stages. First, the adversary queries the target model to obtain candidate training examples, and then applies a membership inference (MI) attack to predict membership for each candidate
\cite{carlini2021extracting,nasr2025scalable}. We assume the adversary possesses prefixes of targeted training examples and aims to extract the next token for each prefix. We further assume the adversary has black-box access to the target model’s output probabilities and white-box access to a reference model trained in similar manner and on similar data as the target model, which is standard in prior work on MI~\cite{mireshghallah-etal-2022-quantifying, miadp2025galende}.

\begin{definition}[Black-box TDE Game]
Let $S_e = \{s_1,\dots,s_n\}\subset S$ and let $r_1$ be an LLM trained on $S$.
Let $X_e = \{x_1,\dots,x_n\}$ be such that, for all $i$, $x_i$ is a prefix of $s_i$.
A TDE adversary $\mathsf{ALG}$ is given $X_e$, a reference LLM $r_0$, and black-box access to $r_1$, allowing it to query $r_1$ with any $x$ and observe $r_1(y|x)$ for all $y\in Y$.
The adversary outputs a prediction $y_i$ for each $x_i$ or abstains.
The success of $\mathsf{ALG}$ is measured by its accuracy on predicted tokens as a function of coverage (the fraction of non-abstained predictions), as standard in selective classification~\cite{selacc2017}.
\end{definition}

\begin{remark}
We restrict the adversary to predicting specified training examples rather than allowing it to generate arbitrary examples.
This is because some training examples are inherently easier to extract than others, and an unrestricted adversary would simply focus on the easiest cases.
Our formulation instead enables us to reason about the extractability of a given training example.
\end{remark}

\paragraph{Likelihood ratio attack (LiRA).} Without loss of generality, we assume that the adversary first picks a candidate $y_i$ for each $x_i$ (the generation stage) and then decides whether to keep it or abstain (the MI stage).
While an optimal strategy for the full TDE game is intractable~\cite{pmlr-v97-sablayrolles19a}, it has been shown that under reasonable
assumptions (discussed in Appendix~\ref{app:tde}), the optimal strategy for the MI stage is thresholding the likelihood ratio $\frac{r_1(y_i| x_i)}{r_0(y_i| x_i)}$ (LiRA)~\cite{pmlr-v97-sablayrolles19a,mireshghallah-etal-2022-quantifying,miadp2025galende}.
Although LiRA is derived for the MI stage, the same ratio also characterizes the information gain
obtained from observing a candidate token during generation.
Consequently, uniformly bounding this ratio limits the maximum advantage attainable
by any candidate-generation strategy, and therefore constrains both the generation
and MI stages of TDE attacks.

In the simplified case that $r_0 = q$, standard NAF guarantees yield a direct bound on the likelihood ratio.
Indeed, for $r_1 = \mathrm{CP}\text{-}\Delta(p,q)$, we have
$\frac{r_1(y| x)}{q(y| x)} \le e^{k_x}$, as explained in
subsection~\ref{sub:NAF}.
For CP-$\Delta_r$, we present the following result:

\begin{theorem} [CP-$\Delta_r$ LiRA bound for $r_0 = q$]
Let $r_1 = \text{CP-}\Delta_r(p,q)$, $r_0 = q$ and $k_x$ the bound in Theorem \ref{thm:cp_dr} then $$\frac{r_1(y|x)}{r_0(y|x)} = \frac{r_1(y|x)}{q(y|x)} \leq \frac{tr_1(x)}{tq(x)}\cdot e^{k_x} =: e^{t_x},$$ 
where $tq(x)$ and $tr_1(x)$ denote the typical probabilities of $q(\cdot|x)$ and $r_1(\cdot|x)$.
\label{theorem:tde_q}
\end{theorem}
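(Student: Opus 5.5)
The plan is to rewrite the likelihood ratio in terms of relative probabilities and then apply the NAF guarantee of Theorem~\ref{thm:cp_dr}. Fix $x$ and drop it from the notation.

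\emph{Step 1: split the ratio.} By the definition of the RPD, $r_1(y) = tr_1 \cdot rr_1(y)$ and $q(y) = tq\cdot rq(y)$. Hence
\[
\frac{r_1(y)}{q(y)} = \frac{tr_1}{tq}\cdot \frac{rr_1(y)}{rq(y)}.
\]
This isolates the ratio of typical probabilities, which appears verbatim in the claimed bound. What remains is to show $\frac{rr_1(y)}{rq(y)} \le e^{k_x}$ for every $y$.

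\emph{Step 2: invoke the NAF bound.} Theorem~\ref{thm:cp_dr} states that $r_1=\mathrm{CP}\text{-}\Delta_r(p,q)$ is $k_x$-NAF-tight with respect to $p$ and $q$ under the relative max-divergence. In particular,
\[
\Delta_r(r_1\|q) = \max_y \log\frac{rr_1(y)}{rq(y)} \le k_x,
\]
which gives $\frac{rr_1(y)}{rq(y)} \le e^{k_x}$ for all $y$. Multiplying this by $\frac{tr_1}{tq}$ from Step 1 yields the inequality, and $t_x$ is then simply defined by $e^{t_x} := \frac{tr_1}{tq}e^{k_x}$.

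\emph{Main obstacle.} The only subtle point is that Theorem~\ref{thm:cp_dr} is phrased for the output of CP-$\Delta_r$, and that output is defined as $u/Z$ with $u = \min(rp,rq)$. I will check that taking the RPD of this normalized distribution is consistent with the theorem. The RPD is invariant under positive rescaling, so $rr_1 = u/t u$, where $t u = \exp\bigl(\frac1n\sum_y \log \min(rp(y),rq(y))\bigr)$. This gives
\[
\log\frac{rr_1(y)}{rq(y)} = \log\frac{u(y)}{rq(y)} - \log tu \le -\log tu = \frac1n\sum_y \max\Bigl(0,\log\frac{rq(y)}{rp(y)}\Bigr),
\]
since $u(y)\le rq(y)$. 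Because $\sum_y \log rp = \sum_y \log rq = 0$, the positive and negative parts of $\log\frac{rq}{rp}$ have equal total mass. The last expression therefore equals $\frac{1}{2n}\sum_y\bigl|\log\frac{rp(y)}{rq(y)}\bigr| = k_x$. This recovers the needed instance of Theorem~\ref{thm:cp_dr} directly, so the proof does not depend on how that theorem is interpreted.
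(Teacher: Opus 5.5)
Your proof is correct and takes the same route as the paper. Like the paper, you factor $r_1/q$ as $(tr_1/tq)\cdot(rr_1/rq)$ and bound the RPD ratio by $e^{k_x}$ using the NAF guarantee of Theorem~\ref{thm:cp_dr}; your additional direct check that $\log(rr_1(y)/rq(y)) \le -\log tu = D_r(p,q)$ is a valid self-contained derivation of the instance of that theorem which the paper simply cites.
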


Proof and empirical analysis of Theorem~\ref{theorem:tde_q} are provided in Appendix~\ref{app:proof-thm-tde-q} and~\ref{app:tx_bound} respectively.

% Note that empirically, $k_x$ is much smaller for SCP-$\Delta_r$ compared to CP-$\Delta_r$ and CP-$\Delta$ (Figure \ref{fig:kx_hist}).

For $r_0 \neq q$, we assume that $r_0 = \text{CP-}\Delta_r(p_0,q)$ for some model $p_0$. We note that the more similar $r_0$ is to $r_1$ (apart from its training set not including the target set $S_e$), the better for the adversary, so this is the "worst case".

\begin{theorem} [LiRA bound for $r_0=\textrm{CP-}\Delta_r(p_0,q)$]
    Let $r_i = \textrm{CP-}\Delta_r(p_i,q)$ for $i \in \{0,1\}$, and $k_x$ the corresponding bound from Theorem~\ref{thm:cp_dr} for $r_1$, then
    
    % $$\frac{r_1(y|x)}{r_0(y|x)}  \leq \max\bigg(1, \frac{rq(y|x)}{rp_0(y|x)}\bigg) \cdot  \frac{Z_0}{Z_1} =: e^{ke_x(y)}$$
    % where $Z_i=\sum_y \min(rp_i(y|x),rq(y|x))$.
    $$\frac{r_1(y|x)}{r_0(y|x)}  \leq \max\bigg(1, \frac{rq(y|x)}{rp_0(y|x)}\bigg) \cdot  \frac{tr_1(x)}{tr_0(x)}\cdot e^{k_x} =: e^{v_x(y)}$$
    \label{theorem:scp_tde_tight}
\end{theorem}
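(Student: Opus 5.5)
The plan is to factor each likelihood ratio into a ratio of typical probabilities times a ratio of relative probabilities. Then I bound the numerator's relative probability using Theorem~\ref{thm:cp_dr} and the denominator's from below by an elementary geometric-mean argument.

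\emph{Step 1 (factorization).} For any distribution $r$, the definition of the RPD gives $r(y|x) = tr(x)\cdot rr(y|x)$. Applying this to $r_0$ and $r_1$,
\[
\frac{r_1(y|x)}{r_0(y|x)} = \frac{tr_1(x)}{tr_0(x)}\cdot \frac{rr_1(y|x)}{rr_0(y|x)}.
\]
It therefore suffices to show
\[
\frac{rr_1(y|x)}{rr_0(y|x)} \le \max\Big(1,\frac{rq(y|x)}{rp_0(y|x)}\Big)\, e^{k_x}.
\]

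\emph{Step 2 (upper bound on the numerator).} By Theorem~\ref{thm:cp_dr}, $r_1=\mathrm{CP}\text{-}\Delta_r(p_1,q)$ is $k_x$-NAF with respect to $q$ under the relative max-divergence $\Delta_r$. This means $\log\frac{rr_1(y|x)}{rq(y|x)}\le k_x$ for every $y$, that is, $rr_1(y|x)\le e^{k_x}\, rq(y|x)$. This is the same ingredient I expect the proof of Theorem~\ref{theorem:tde_q} to use.

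\emph{Step 3 (lower bound on the denominator).} Write $u_0(y):=\min(rp_0(y),rq(y))$, so that $r_0 = u_0/Z_0$. The RPD is invariant under positive rescaling, so $rr_0(y)=u_0(y)/g_0$, where $g_0:=\exp\big(\frac1n\sum_y\log u_0(y)\big)$ is the geometric mean of $u_0$. Since $u_0\le rq$ pointwise and $\sum_y \log rq(y)=0$, we get $\log g_0 \le \frac1n\sum_y\log rq(y) = 0$, hence $g_0\le 1$. Therefore
\[
rr_0(y)\ \ge\ u_0(y)=\min(rp_0(y),rq(y)).
\]

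\emph{Step 4 (combine).} Steps 2 and 3 give
\[
\frac{rr_1(y)}{rr_0(y)} \le e^{k_x}\,\frac{rq(y)}{\min(rp_0(y),rq(y))} = e^{k_x}\max\Big(1,\frac{rq(y)}{rp_0(y)}\Big).
\]
Substituting this into Step 1 yields the claim.

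The only point needing care is Step 3: the sign of $\log g_0$ must be right, and the scale invariance of the RPD must be handled correctly, since $r_0$ is renormalized by $Z_0$ rather than by $g_0$. Everything else is direct substitution. The same argument carries over to SCP-$\Delta_r$ by replacing $rq$ and $rp_0$ with their smoothed counterparts, because the smoothed objects are again RPDs satisfying $\sum_y\log \widetilde{rq}(y)=0$.
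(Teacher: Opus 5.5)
Your proof is correct and is essentially the paper's argument with the factors grouped differently. The paper writes $r_1/r_0=(u_1/u_0)\cdot(Z_0/Z_1)$ with $u_i:=\min(rp_i,rq)$, bounds $u_1\le rq$, and uses the tight form of Theorem~\ref{thm:cp_dr} to get $Z_i\,tr_i=e^{-D_r(p_i,q)}$, whereas you factor through the RPDs and need only the one-sided NAF inequality for $r_1$ plus your geometric-mean bound $g_0\le 1$ (exactly the paper's $e^{-D_r(p_0,q)}\le 1$, since $g_0=Z_0\,tr_0$), so the same three inequalities drive both.
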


A proof for Theorem~\ref{theorem:scp_tde_tight} is provided in Appendix~\ref{app:proof-thm-scp_tde_tight}.

\begin{remark}
    While typical probabilities are affected by probability spikes, even if $p_1$ exhibits such spikes, the aggregated model $r_1$ need not, due to the CP aggregation with $q$. As a result, the ratios $\frac{tr_1(x)}{tr_0(x)}$ and $\frac{tr_1(x)}{tq(x)}$ appearing in Theorems~\ref{theorem:tde_q} and~\ref{theorem:scp_tde_tight} are expected to be small for most prompts in practice.
    \label{rem:tp_ratio_small}
\end{remark}

\begin{remark} [Theorem~\ref{theorem:scp_tde_tight} variation for CP-$\Delta$]
    Let $r_i = \textrm{CP-}\Delta(p_i,q)$ for $i \in \{0,1\}$, and $k_x$ the corresponding bound from Theorem~\ref{thm:cp_d}, then
    $\frac{r_1(y|x)}{r_0(y|x)}  \leq \max(1, \frac{q(y|x)}{p_0(y|x)}) \cdot e^{k_x}.$
    Note that for CP-$\Delta$, $k_x$ is empirically large compared to CP-$\Delta_r$ and SCP-$\Delta_r$ (Figure~\ref{fig:kx_hist}).
\end{remark}

\begin{proposition} [$v_x(y)$ is small for smoothed (rare) tokens]
For SCP-$\Delta_r$, suppose that $y$ is smoothed in both $\widetilde{rq}$ and
$\widetilde{rp_0}$.
Then, under the parameters of Proposition~\ref{prop:scp_dr_bound}, $\frac{\widetilde{rq}(y| x)}{\widetilde{rp_0}(y| x)} \le 1.026$, which implies
\[
e^{v_x(y)} \le 1.026 \cdot \frac{tr_1(x)}{tr_0(x)} \cdot e^{k_x} \leq 1.053 \cdot \frac{tr_1(x)}{tr_0(x)}.
\]
% Moreover, $k_x$ is expected to be smaller for SCP-$\Delta_r$ by
% Proposition~\ref{prop:scp_dr_bound} and Remark~\ref{remark:2k_sparps}.
\label{prop:scp-improved-bound}
\end{proposition}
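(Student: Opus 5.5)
If $y$ is smoothed in both $\widetilde{rq}$ and $\widetilde{rp_0}$, the definition of $\mathrm{S}_m$ gives $\widetilde{rq}(y|x)=\beta_q\, rb(y|x)$ and $\widetilde{rp_0}(y|x)=\beta_{p_0}\, rb(y|x)$. The base RPD cancels, so
\[
\frac{\widetilde{rq}(y|x)}{\widetilde{rp_0}(y|x)}=\frac{\beta_q}{\beta_{p_0}}.
\]
The task therefore reduces to bounding how far each normalizing constant $\beta$ can be from $1$.

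To bound $\beta$, I write out the normalization condition $\sum_y \log \mathrm{S}_m(rp,rb)(y)=0$. Splitting the sum over $I$ and $Y\setminus I$ and using $\sum_y \log rb(y)=0$ gives
\[
n\log\beta+\sum_{y\in I}\bigl(\log rp(y)-\log rb(y)\bigr)=0,
\qquad\text{so}\qquad
|\log\beta|\le \frac{1}{n}\sum_{y\in I}\Bigl|\log\frac{rp(y)}{rb(y)}\Bigr|.
\]
Each summand is controlled under the parameters of Proposition~\ref{prop:scp_dr_bound}. Since all probabilities lie in $[e^{-20},1]$, each $\log$ relative-probability ratio has magnitude at most $\log M+|\log\alpha|\le 40$. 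This is the same budget used in Proposition~\ref{prop:scp_dr_bound}: the raw ratio and the typical-probability ratio together are bounded by the $40$ range. Hence $|\log\beta|\le \frac{m}{n}\cdot 40 = 0.025$.

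Applying this to both $\beta_q$ and $\beta_{p_0}$ would naively give $|\log(\beta_q/\beta_{p_0})|\le 0.05$. The claimed $1.026\approx e^{0.025}$ instead needs one-sided control, and this is the step I expect to be the main obstacle. The plan is to note that for selected tokens the score $p(y)\log\frac{rp(y)}{rb(y)}$ is large and positive, so the selected log-ratios are predominantly positive. That makes each $\log\beta$ lie in $[-0.025,0]$ essentially, and the difference of two such quantities is then bounded by $0.025$.

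If the one-sided argument does not go through cleanly, I would instead bound the combined quantity directly. The key is that $\log\beta_q-\log\beta_{p_0}$ is $\frac{1}{n}$ times a difference of two sums over $m$ tokens, each term in $[-20,20]$ after centering. Using the $e^{-20}$ floor, this gives a total range of $40m/n$, which yields $e^{0.025}\le 1.026$.

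Finally, I substitute into Theorem~\ref{theorem:scp_tde_tight}. Since $\widetilde{rq}(y|x)/\widetilde{rp_0}(y|x)\le 1.026$, the factor satisfies $\max(1,\cdot)\le 1.026$. For $k_x$, Remark~\ref{remark:2k_sparps} shows the smoothed $\widetilde p,\widetilde q$ are $2m$-SpaRPS with respect to each other. Proposition~\ref{prop:scp_dr_bound}, applied with the relevant $m$, then gives $k_x=D_r\le 0.025$, so $e^{k_x}\le 1.026$, where Theorem~\ref{thm:cp_dr} identifies $k_x$ with $D_r$. Multiplying the two factors gives $1.026^2\approx1.053$, which is the stated bound.
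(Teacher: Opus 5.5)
There is a gap: the step that delivers the constant $1.026$ is not actually proved. Your per-$\beta$ bound $|\log\beta|\le \frac{m}{n}\cdot 40=0.025$ takes $|I|=20$. But $I$ is the unsmoothed set of a \emph{single} smoothed RPD, so $|I|$ is the smoothing parameter $m=10$. The value $20$ in Proposition~\ref{prop:scp_dr_bound} is the $2m$ coming from Remark~\ref{remark:2k_sparps}, which is exactly how you use it in your final paragraph.

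With $|A_q|=|A_0|=10$ and $|\log rb(z)-\log rq(z)|\le 40$, you get $|\log\beta_q|,\,|\log\beta_{p_0}|\le 400/32000=0.0125$. The ordinary triangle inequality then gives $|\log(\beta_q/\beta_{p_0})|\le 0.025$, i.e.\ a ratio of at most $e^{0.025}<1.026$. That is the paper's proof, and no one-sided control is needed.

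Because of the miscount you instead rely on a heuristic (``predominantly positive'', ``essentially''), which is not an argument. Your fallback is also wrong as stated. Each term $\log rb(z)-\log rq(z)$ can be as large as $40$ in magnitude, not $20$. And the two sums run over different sets $A_q$ and $A_0$, so no common recentering shrinks them.

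That said, your one-sided claim is true and can be proved cleanly. Write $L(y):=\log\frac{rp(y)}{rb(y)}$, for either $q$ or $p_0$ in place of $p$. Since $p(y)>0$, $\mathrm{score}_{p,b}(y)$ has the sign of $L(y)$, so the top-$m$ set $I$ either lies inside $\{L>0\}$ or contains all of it.
\begin{itemize}
\item In the first case, $\sum_{y\in I}L(y)\ge 0$ trivially.
\item In the second case, every token outside $I$ has $L\le 0$. Since both RPDs have zero log-sum, $\sum_{y\in Y}L(y)=0$, so $\sum_{y\in I}L(y)=-\sum_{y\notin I}L(y)\ge 0$.
\end{itemize}
Hence $\log\beta\in[-40|I|/n,\,0]$ for both $\beta_q$ and $\beta_{p_0}$, and their difference is at most $40|I|/n$ in magnitude. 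This rescues your route even with your over-count $|I|=20$, and with the correct $|I|=10$ it sharpens the constant to $e^{0.0125}$.

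Your handling of the second inequality is fine: $e^{k_x}\le e^{0.025}$ via $2m$-SpaRPS and Proposition~\ref{prop:scp_dr_bound}, and then $1.026^2\le 1.053$.
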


Notably, under the conditions of Propositions~\ref{prop:scp_dr_bound} and~\ref{prop:scp-improved-bound}, together with Remark~\ref{rem:tp_ratio_small}, all terms contributing to the bound $e^{v_x(y)}$ are small. A proof and illustration of Proposition~\ref{prop:scp-improved-bound} are provided in Appendix~\ref{app:proof-prop-scp-improved-bound} and~\ref{app:prop_illu}, respectively.

\begin{figure}[b]
  \centering
  \begin{subfigure}{0.48\linewidth}
    \centering
    \includegraphics[width=\linewidth]{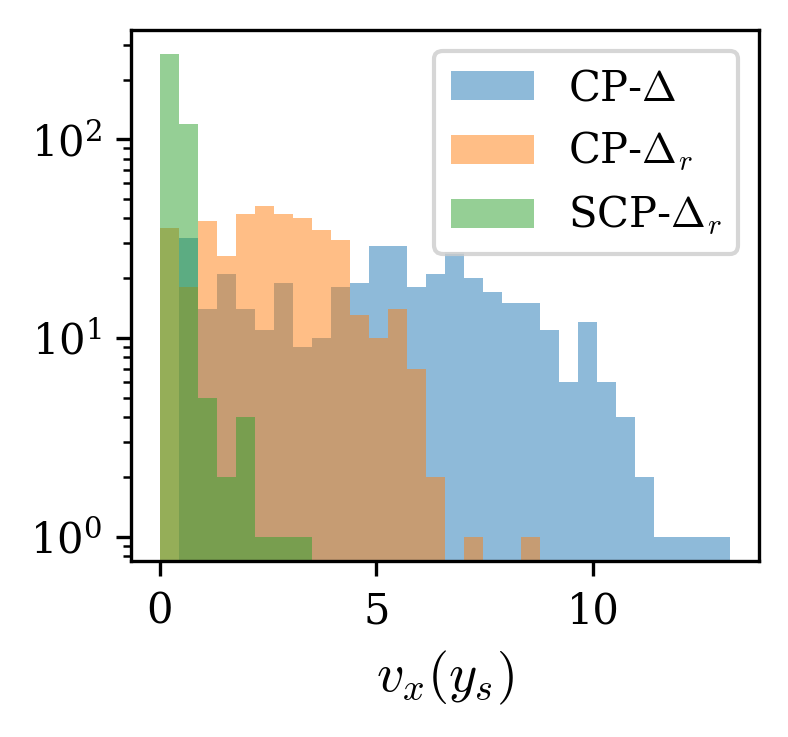}
    \caption{Rare tokens extraction}
  \end{subfigure}
  \hfill
  \begin{subfigure}{0.48\linewidth}
    \centering
    \includegraphics[width=\linewidth]{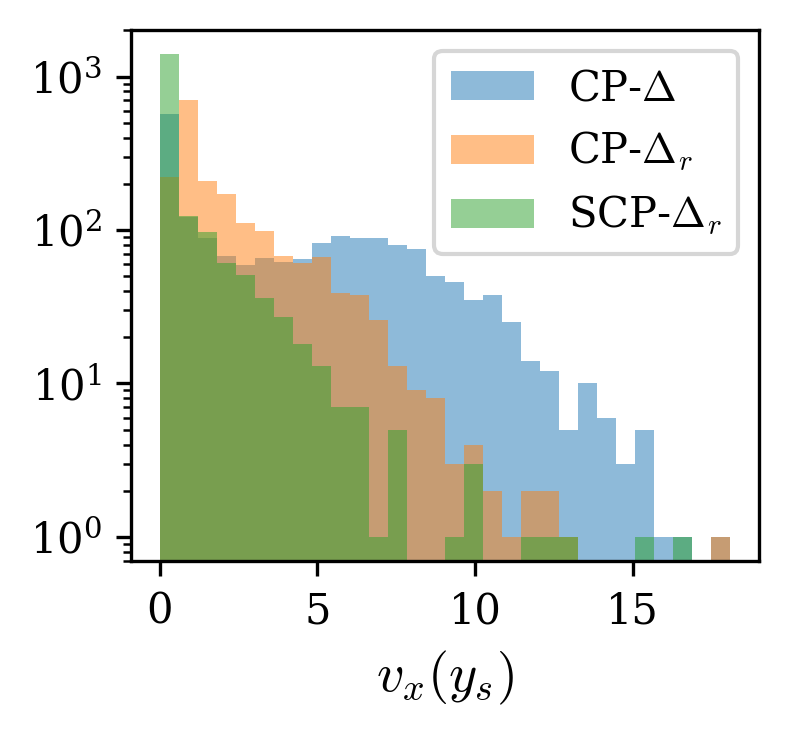}
    \caption{Typical tokens extraction}
  \end{subfigure}
  \caption{Log-scale histograms of Theorem \ref{theorem:scp_tde_tight} bound $v_x(y_s)$, evaluated at the token $y_s$
  corresponding to the extraction target $s$. (a) Extraction of rare tokens (Canaries, see~\ref{subsec:tde_naf}). (b) Extraction of tokens from the \emph{MathAbstracts} dataset (see~\ref{sub:experimental}).}
  \label{fig:tde_bounds}
\end{figure}

Figure~\ref{fig:tde_bounds} illustrates empirical histograms of the bounds implied by Theorem~\ref{theorem:scp_tde_tight} in a rare vs typical token scenarios. Here, $p_i = T(S_i)$ and $q = T(S_q)$. The datasets $S_1$, $S_0$, and $S_q$ contain $3000$, $2900$, and $3000$ training examples respectively, with $S_0 \subset S_1$ and $S_1 \cap S_q = \emptyset$.
The results in Figure~\ref{fig:tde_bounds} indicate that SCP-$\Delta_r$ substantially improves the extraction bound relative to CP-$\Delta_r$ and CP-$\Delta$, especially for rare tokens, as Proposition~\ref{prop:scp-improved-bound} suggests.

\begin{table*}[]
\caption{Full results table. Best values are in bold. Second best values are underlined.}
\centering
\small
\setlength{\aboverulesep}{0.3ex}
\setlength{\belowrulesep}{0.3ex}
\begin{tabular}{lcrlrlrlrlrlrcl}
                 &  \multicolumn{1}{c}{NAS $\downarrow$} & \multicolumn{2}{c}{Canary $\downarrow$}             & \multicolumn{2}{c}{PII $\downarrow$}              & \multicolumn{6}{c}{TTE $\downarrow$}                                                                                                                      & \multicolumn{3}{c}{Utility $\uparrow$}                                          \\
                 \cmidrule(lr){2-2} \cmidrule(lr){3-4} \cmidrule(lr){5-6} \cmidrule(lr){7-12} \cmidrule(lr){13-15}
                 & \multicolumn{1}{c}{All} & \multicolumn{2}{c}{Math}                            & \multicolumn{2}{c}{IDs}                           & \multicolumn{2}{c}{Math}                          & \multicolumn{2}{c}{Story}                         & \multicolumn{2}{c}{Code}                          & \multicolumn{1}{c}{Math} & \multicolumn{1}{c}{Story} & \multicolumn{1}{c}{Code} \\
                 & \multicolumn{1}{c}{Mean} & \multicolumn{1}{c}{Mean} & \multicolumn{1}{c}{95\%} & \multicolumn{1}{c}{AEL} & \multicolumn{1}{c}{FER} & \multicolumn{1}{c}{AUC} & \multicolumn{1}{c}{ACC} & \multicolumn{1}{c}{AUC} & \multicolumn{1}{c}{ACC} & \multicolumn{1}{c}{AUC} & \multicolumn{1}{c}{ACC} & \multicolumn{1}{c}{ACC}  & \multicolumn{1}{c}{ACC}   & \multicolumn{1}{c}{ACC}  \\ \midrule
No CP & .99 & 28 & 31 & 7.4 & .73 & .94 & .84 & .98 & .92 & .96 & .84 & .47 & .37 & .67 \\ 
CP-$\Delta$ & .47 & 8.8 & 17 & 2.4 & .15 & .90 & .69 & .96 & .80 & .89 & .76 & {\ul .50} & {\ul .41} & {\ul .70} \\ 
CP-$\Delta_{KL}$ & .76 & 23 & 33 & 4.1 & .34 & .93 & .77 & .97 & .86 & .88 & .80 & \textbf{.51} & \textbf{.42} & {\ul .70} \\ 
CP-Fuse & .50 & 3.8 & 7.1 & 4.1 & .34 & .93 & .77 & .97 & .86 & .88 & .80 & .48 & .38 & .68 \\ 
CP-$\Delta_r$ & .22 & 3.6 & 8.0 & 1.9 & .06 & .80 & .50 & .86 & \textbf{.58} & {\ul .80} & {\ul .61} & {\ul .50} & .40 & .69 \\ 
SCP-$\Delta_r$-c & {\ul .17} & {\ul 1.6} & {\ul 3.4} & {\ul 1.8} & {\ul .05} & {\ul .79} & {\ul .49} & \textbf{.83} & {\ul .59} & .81 & {\ul .61} & .49 & .40 & .69 \\ 
SCP-$\Delta_r$-b & \textbf{.11} & \textbf{1.4} & \textbf{3.2} & \textbf{.43} & \textbf{$<$.001} & \textbf{.76} & \textbf{.46} & {\ul .84} & .60 & \textbf{.78} & \textbf{.60} & {\ul .50} & .39 & \textbf{.72} \\ 
\midrule 
No CP-val & .002 & 1.4 & 3.6 & .12 & 1e-10 & .66 & .38 & .71 & .45 & .71 & .51 & .47 & .37 & .67 \\ 
STE & - & .11 & .90 & .01 & - & .01 & .01 & .01 & .01 & .02 & .01 & .01 & .01 & .01 \\
\end{tabular}
\label{tab:main}
\end{table*}

\section{Evaluation}

\subsection{Experimental Setup}
\label{sub:experimental}
We evaluate the same datasets as \citet{abad2025copyrightprotected}, in addition to a synthesized \emph{Names2IDs} dataset, for a total of four datasets: \emph{MathAbstracts}~\cite{zhang-etal-2025-autonomous} (paper titles and abstracts), \emph{StoryTelling}~\cite{fan-etal-2018-hierarchical} (story topics and stories), \emph{CodeInstructions}~\cite{code_dataset} (instructions and Python solutions), and \emph{Names2IDs} (synthetic names paired with 10-digit IDs). For \emph{CodeInstructions} we fine-tune StarCoder-7B~\cite{li2023starcodersourceyou}; for the remaining datasets we use LLaMA2-7B~\cite{touvron2023llama2openfoundation}.

Unless stated otherwise, we fine-tune two constituent models $p$ and $q$ on disjoint partitions of the training set, using $3000$ examples per partition ($6000$ total), for $50$ epochs to encourage memorization, following \citet{abad2025copyrightprotected}.
The smoothing parameter is set to $m=10$ by default (see Appendix~\ref{app:m_choice} on this choice). All experiments were conducted on an NVIDIA RTX 6000 GPU with 48\,GB of memory. Additional training details are provided in Appendix~\ref{app:training}.

\paragraph{Compared models.}
We compare CP-$\Delta$, CP-$\Delta_{\mathrm{KL}}$~\cite{vyas2023provable}, CP-Fuse~\cite{abad2025copyrightprotected}, CP-$\Delta_r$~\cite{segal2025domba}, and our method SCP-$\Delta_r$.
We evaluate two variants of SCP-$\Delta_r$: SCP-$\Delta_r$-b uses the pre-fine-tuning model as the base model, while SCP-$\Delta_r$-c uses a constant logits vector as the base model, obtained by averaging the pre-fine-tuning model’s logits over 100 held-out examples.
As an additional baseline, we report results for \emph{No CP}, which directly uses the model trained on the first partition (i.e., $p$). For reference, we report the validation-set results of \emph{No CP}, together with their standard error (STE).

\subsection{Utility Evaluation}
Since the fine-tuning datasets of the two constituent models originate from the same source distribution, their output distributions are expected to remain similar under typical usage. Consequently, aggregation is not expected to meaningfully degrade utility. This observation was also validated empirically in prior NAF-based works~\cite{vyas2023provable, abad2025copyrightprotected, segal2025domba}.
Nevertheless, SCP-$\Delta_r$ introduces an additional smoothing component, motivating a dedicated evaluation of its utility impact. We report next-token accuracy on held-out validation sets for all experiments, and additionally evaluate downstream reasoning and instruction-following performance with and without smoothing (Appendix~\ref{app:utility}). We also derive utility degradation bounds for SCP-$\Delta_r$ in Appendix~\ref{app:scp_util_bound}.

\subsection{Extraction of NAF-Protected LLMs' Training Data}
\label{subsec:tde_naf}

In this subsection, we empirically evaluate NAF-protected models against TDE attacks. While our theoretical analysis considers optimal LiRA attacks, our empirical attacks do not explicitly implement LiRA. Nevertheless, they can be interpreted as instantiating LiRA with a near-ideal reference model, as discussed in Appendix~\ref{app:lira_propo}.
We note that when the adversary is restricted (e.g., when the reference model is far from ideal), previously proposed MIA attacks have been shown to outperform LiRA~\cite{fu2023practical, zhang2025min}. This occurs because the effectiveness of LiRA degrades under restricted adversaries. In our setting, however, we are unaware of attacks stronger than LiRA, which is optimal under certain assumptions (Appendix~\ref{app:tde}).

\paragraph{The canary method~\cite{carlini2019secret}.} The canary method consists of inserting sequences of random words to the training set and then evaluating the model memorization of them. We use the \emph{exposure score} defined by \citet{carlini2019secret}. An exposure score $w$ indicates that the canary ranks among the $2^{-w}$ most probable sequences according to the model. Our canaries consist of 3 randomly generated words. We inserted each canary 3 times into the \emph{MathAbstracts} training set (all in the same partition) and trained the models for 5 epochs. Appendix~\ref{app:canary} further reports results for canaries inserted 1 or 10 times.

\paragraph{PII extraction.}
We use the \emph{Names2IDs} dataset to simulate extraction of personally identifiable information (PII).
Given a prompt of the form ``\texttt{Name: <name>, ID:}'', the adversary queries the model, records the next generated token, appends it to the prompt, and repeats adaptively. We report the average extracted length (AEL) and the full ID extraction rate (FER).

\paragraph{Token-by-token extraction (TTE).}
TTE attempts to reconstruct training examples in two adaptively repeated stages.
First, the adversary queries the model using a prefix of a training example and
records the next generated token.
Second, the adversary predicts whether this token corresponds to the true label
by thresholding its probability under the model.
We report single token extraction success using
the selective classification framework (allowing abstention), measuring the accuracy-coverage AUC
(standard in selective classification~\cite{selacc2017}) and the accuracy of the attack when the adversary always accepts the generated token (ACC). We evaluate TTE on the \emph{MathAbstracts}, \emph{StoryTelling}, and \emph{CodeInstruction} datasets.

Importantly, the fusing mechanism of CP-Fuse has no effect on the first generated token \cite{abad2025copyrightprotected}.
Since both TTE and our PII extraction method are adaptive and rely only on the
first generated token for each prompt, CP-Fuse is effectively
reduced to CP-$\Delta_{\mathrm{KL}}$ in this setting.

\subsection{Results}
\label{subsec:results}
We report the main results and ablation studies below.
Additional results, including experiments on LoRA fine-tuned LLMs, are presented in Appendix~\ref{app:exps}.

\paragraph{Main results.}
Table~\ref{tab:main} summarizes security under the canary method and PII and TTE extraction attacks, together with next-token accuracy on held-out data. We also report the average normalized attack score (NAS) across the three attacks. Specifically, each attack score (column) is normalized to $[0,1]$ (excluding the STE row), and NAS is computed by averaging the normalized scores across columns, with TTE scores scaled by a factor of $1/3$ to balance the contribution of all attack types.
SCP-$\Delta_r$-b provides the strongest protection under the canary method and PII extraction, and achieves security comparable to CP-$\Delta_r$ on TTE, consistent with Proposition~\ref{prop:scp-improved-bound}. SCP-$\Delta_r$-c performs similarly to SCP-$\Delta_r$-b, except on PII extraction, which we analyze further in Appendix~\ref{app:scp_b_vs_c}. Utility remains comparable across CP-style methods, with only minor differences.
Notably, non-RPD-based CP-style algorithms remain highly vulnerable to our attacks, with NAS values ranging from 0.47 to 0.76. In contrast, SCP-$\Delta_r$-b achieves an NAS of 0.11, substantially closer to the reference baseline (No CP-val, NAS = 0.002) and substantially improving over CP-$\Delta_r$ (NAS = 0.22).

\begin{figure}[t]
  \centering
  \begin{subfigure}{0.44\linewidth}
    \centering
    \includegraphics[width=\linewidth]{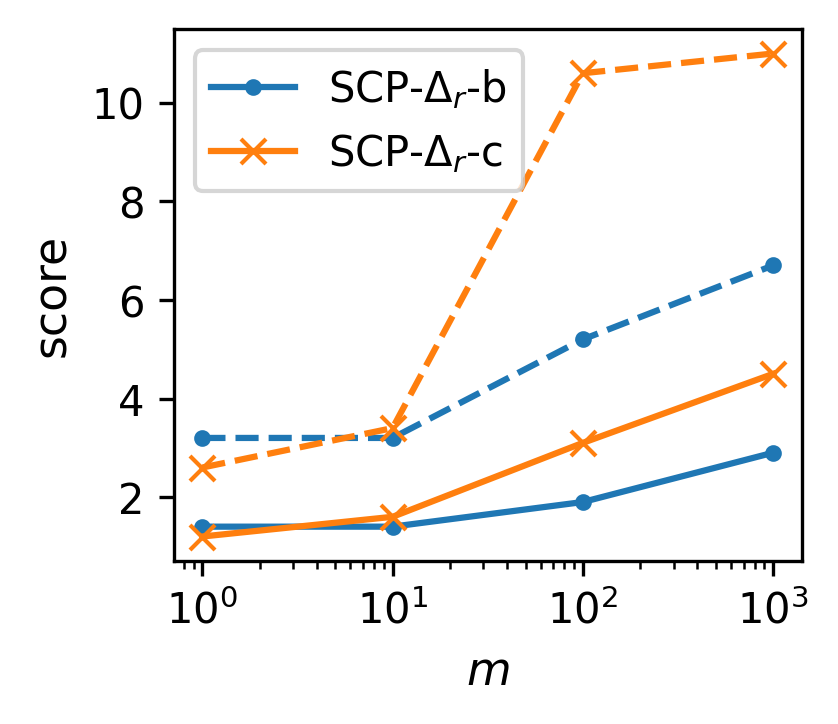}
    \caption{Smoothing level effect}
    \label{fig:smooth_m}
  \end{subfigure}
  \hfill
  \begin{subfigure}{0.53\linewidth}
    \centering
    \includegraphics[width=\linewidth]{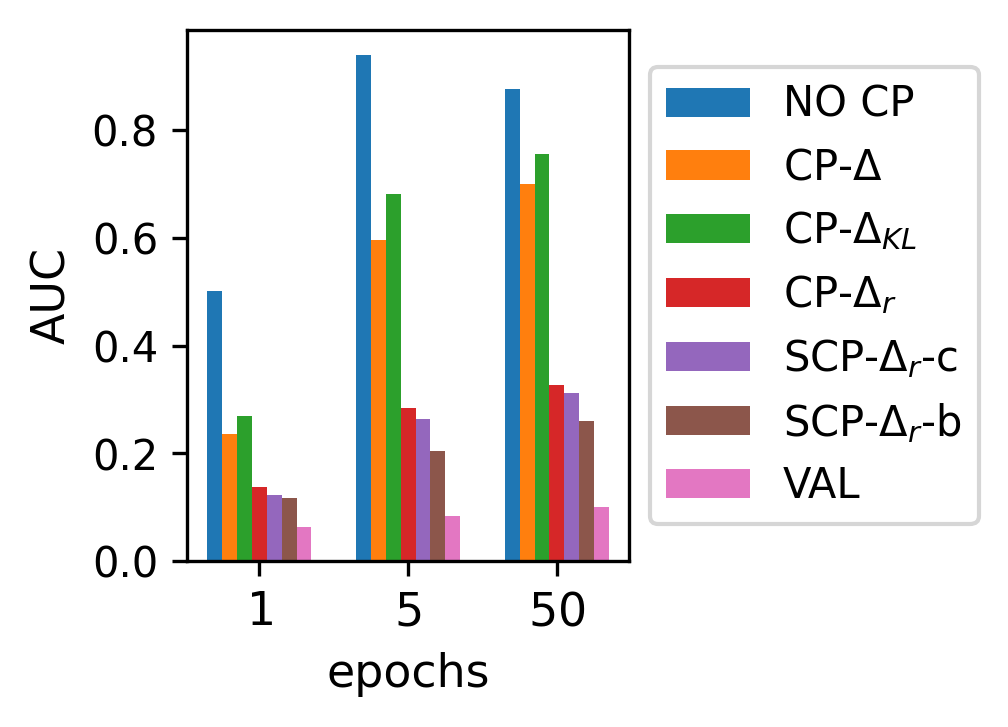}
    \caption{Memorization effect}
    \label{fig:epochs}
  \end{subfigure}
  \caption{(a) Canary score vs (log scale) smoothing parameter $m$ (mean: solid, $99^{\text{th}}$ percentile: dashed). (b) TTE AUC across different memorization levels on the \emph{MathAbstracts} dataset.}
\end{figure}

\paragraph{Effect of smoothing level.}
Figure~\ref{fig:smooth_m} shows the effect of the smoothing parameter $m$ on the canary score for SCP-$\Delta_r$. As we increase $m$, the canary tokens are less likely to be smoothed, and thus protection decreases, which supports Proposition~\ref{prop:scp-improved-bound} (See Appendix~\ref{app:scp_b_vs_c} for further discussion).
In terms of utility, the only noticeable degradation occurs for $m=1$, for which both variations achieves a validation accuracy of $0.39$.
For $m \in \{10,100,1000\}$, validation accuracy remains stable at $0.495 \pm 0.005$.

\paragraph{Effect of constituent models memorization.}
Figure \ref{fig:epochs} shows the effect of the number of training epochs on the TTE attack for the \emph{MathAbstracts} dataset. We focus on tokens that $q$ fails to predict, since tokens that $q$ predicts are trivial to recover and hinder distinguishability between methods for low memorization. Notably, the ranking of the different algorithms by AUC is consistent across memorization levels.

\paragraph{Combining smoothing with raw probabilities.}
Applying CP-$\Delta$ to $\widetilde{p}, \widetilde{q}$ being raw probability distributions corresponding to the smoothed RPDs $\widetilde{rp}, \widetilde{rq}$ yields a canary score mean and $95^{\text{th}}$ percentile of $8.1$ and $18.3$, respectively.
These values are comparable to those obtained by CP-$\Delta$ (Table~\ref{tab:main}), indicating that aggregation over RPDs is required for smoothing to achieve a meaningful reduction in extraction.

\section{Discussion}

\paragraph{Relevance to Pre-Training.}
The principles underlying SCP-$\Delta_r$ may be relevant to pre-training.
At the same time, pre-training differs in scale, optimization dynamics and  data heterogeneity.
We therefore view SCP-$\Delta_r$ as a fine-tuning defense, while its extension to pre-training remains an interesting direction for future work.

\paragraph{Assuming Partition Structure.}
Our guarantees rely on the standard partition assumption used by NAF-based methods: for any sensitive example, at least one constituent model is not trained on that example or its semantic equivalent.
This assumption is common in prior work on safe aggregation and copyright protection~\citep{vyas2023provable,abad2025copyrightprotected}.
Moreover, \citet{abad2025copyrightprotected} showed that CP-style algorithms can empirically retain a comparable level of protection even when the dataset partitions largely overlap (e.g., up to 33\%). Extending NAF-based methods to settings with near-duplicates in the training data remains an important direction for future work; one potential approach is to group semantically similar examples within the same partition.

\paragraph{Reliance on Likelihood-Ratio Optimality Assumptions.}
Our extraction bounds do not assume a specific adversarial strategy.
Instead, they provide uniform bounds on the likelihood ratio $\frac{r_1(y| x)}{r_0(y| x)}$.
This likelihood ratio is the optimal test statistic for membership inference under reasonable assumptions, and also captures the information gain relevant to candidate generation.
While the bounds themselves hold independently of how the adversary generates or selects candidates, their interpretation as effective constraints on TDE advantage relies on this connection to optimality. It remains an open question in which regimes these assumptions hold, and whether attacks meaningfully stronger than LiRA exist when they do not.

\section{Conclusion}
Fine-tuning can amplify memorization and enable training data extraction (TDE) attacks, yet existing defenses either lack formal guarantees or incur substantial utility loss.
In particular, CP-style NAF guarantees are often insufficient in realistic fine-tuning settings due to probability spikes.

To address this, we introduced the SpaRPS property to capture sparsity in relative probability shifts and proposed SCP-$\Delta_r$, a NAF-based algorithm that operates on relative probability distributions and enforces SpaRPS via base-model smoothing.
Our method achieves orders-of-magnitude stronger TDE guarantees than existing CP-style methods and empirically reduces canary exposure and PII extraction success while preserving utility.

Overall, our results demonstrate that our combination of relative probabilities and targeted smoothing makes NAF-style defenses practically effective for protecting fine-tuned LLMs against training data extraction. In contrast, previously proposed NAF-style defenses are practically vulnerable to our attacks.
Extending this approach to relaxed assumptions and broader training regimes remains an important direction for future work.

\section*{Impact Statement}

This paper presents methods for protecting fine-tuned large language models from training data extraction attacks by providing strong theoretical and empirical privacy guarantees. The primary intended impact of this work is to improve the safety of machine learning systems trained on proprietary or sensitive data, including personal information, confidential documents, and private codebases.

By reducing the risk of unintended memorization and extraction, our results may help enable more responsible deployment of fine-tuned language models in settings where privacy concerns currently limit their use. Overall, this work advances the technical foundations of privacy-preserving machine learning, and its broader societal implications are largely positive and well understood.

\bibliography{Rel_NAF}
\bibliographystyle{icml2026}

%%%%%%%%%%%%%%%%%%%%%%%%%%%%%%%%%%%%%%%%%%%%%%%%%%%%%%%%%%%%%%%%%%%%%%%%%%%%%%%
%%%%%%%%%%%%%%%%%%%%%%%%%%%%%%%%%%%%%%%%%%%%%%%%%%%%%%%%%%%%%%%%%%%%%%%%%%%%%%%
% APPENDIX
%%%%%%%%%%%%%%%%%%%%%%%%%%%%%%%%%%%%%%%%%%%%%%%%%%%%%%%%%%%%%%%%%%%%%%%%%%%%%%%
%%%%%%%%%%%%%%%%%%%%%%%%%%%%%%%%%%%%%%%%%%%%%%%%%%%%%%%%%%%%%%%%%%%%%%%%%%%%%%%
\newpage
\appendix
\onecolumn

\section{Bounding SCP-$\Delta_r$ Utility Degradation}
\label{app:scp_util_bound}

For $r=\text{CP-}\Delta(p,q)$, \citet{vyas2023provable} proved that
$TV(r,q) \leq TV(q,p)$ (and by symmetry, $TV(r,p) \leq TV(q,p)$).
This implies that if $p$ and $q$ are similar in total variation, then
$r$ is also close to both, and therefore its utility should be comparable.

For SCP-$\Delta_r$, utility degradation may arise from two sources:
(i) the smoothing mechanism, and (ii) the aggregation of relative
probability distributions (RPDs).
In the following, we derive separate bounds for each source of potential
degradation, beginning with the aggregation stage.

\begin{lemma} [Utility degradation bound - aggregation]
\label{lem:cp_d_r_util_bound}
Let $r=\mathrm{CP}\text{-}\Delta_r(p,q)$. Then $$TV(r,q)\le TV(p,q).$$
\end{lemma}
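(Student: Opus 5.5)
The plan is to reduce the statement to the standard pointwise-domination argument used for CP-$\Delta$, namely that the aggregate dominates $\min(p,q)$ pointwise. The only extra work is to absorb the typical-probability rescaling that distinguishes CP-$\Delta_r$ from CP-$\Delta$.

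\emph{Step 1 (rewrite in raw probabilities).} Since $rp = p/tp$ and $rq = q/tq$, set $c := tq/tp > 0$. Then
\[
\min\bigl(rp(y),rq(y)\bigr) = \frac{1}{tq}\min\bigl(c\,p(y),q(y)\bigr).
\]
Hence $r(y) = \min(c\,p(y),q(y))/Z'$ with $Z' := \sum_{y} \min(c\,p(y),q(y))$, because the factor $1/tq$ cancels in the normalization. Two trivial bounds on $Z'$ follow:
\[
Z' \le \sum_y q(y) = 1, \qquad Z' \le \sum_y c\,p(y) = c .
\]

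\emph{Step 2 (pointwise domination $r(y)\ge \min(p(y),q(y))$).} This is the key step, and the point where the unknown scaling $c$ could break the argument. I will split on the value of $c$.
\begin{itemize}
\item If $c \ge 1$: using $Z'\le 1$ gives $r(y) \ge \min(c\,p(y),q(y))$, and this is at least $\min(p(y),q(y))$ since $c\,p(y)\ge p(y)$.
\item If $c<1$: using $Z' \le c$ gives $r(y) \ge \min(c\,p(y),q(y))/c = \min(p(y), q(y)/c)$, and this is at least $\min(p(y),q(y))$ since $q(y)/c\ge q(y)$.
\end{itemize}
So in either case $r \ge \min(p,q)$ pointwise.

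\emph{Step 3 (conclude).} Since $r$ and $q$ are both probability distributions,
\[
TV(r,q) = \sum_y \bigl(q(y)-r(y)\bigr)_+ .
\]
Step 2 gives $\bigl(q(y)-r(y)\bigr)_+ \le q(y) - \min(p(y),q(y)) = \bigl(q(y)-p(y)\bigr)_+$. Summing over $y$ yields $TV(r,q)\le \sum_y (q(y)-p(y))_+ = TV(p,q)$. By the symmetry of the construction in $p$ and $q$, the same argument also gives $TV(r,p)\le TV(p,q)$.
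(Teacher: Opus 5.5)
Your proof is correct and is essentially the paper's argument. The paper also rewrites $r$ in raw probabilities and uses the two normalizer bounds $z\,tq\le 1$ and $z\,tp\le 1$ (your $Z'\le 1$ and $Z'\le c$) to get $r(y)\ge\min(p(y),q(y))$ pointwise. It then sums via $TV(u,v)=1-\sum_y\min(u(y),v(y))$, which is the same as your $\sum_y(q(y)-r(y))_+$.

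Your case split on $c$ is harmless but unnecessary. Writing $r(y)=\min\bigl(c\,p(y)/Z',\,q(y)/Z'\bigr)$ and applying one bound to each argument gives the domination directly, which is what the paper does.
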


Proof is provided in Appendix~\ref{app:proof_cp_d_r_util_bound}.

\begin{lemma} [Utility degradation bound - smoothing]
\label{lem:smooth_kl_bound}
Let $\widetilde{p}$ be the probability distributions corresponding to the RPD $\widetilde{rp} = S_m(rp,rb)$ (see Algorithm~\ref{alg:SCP}) and $A$ be a set of $m$ ``unsmoothed'' tokens and $\bar A$ its complement. Let $\epsilon=\frac{tp}{tb}\sum_{y\in\bar A} b(y)$.
Then
\[
\Delta_{\mathrm{KL}}(p\|\widetilde p)\le \log(1+\epsilon) + \sum_{y\in\bar A} p(y)\log\frac{rp(y)}{rb(y)}.
\]
Consequently, by Pinsker’s inequality, $TV(p,\widetilde p)\le \sqrt{\frac12 \Delta_{\mathrm{KL}}(p\|\widetilde p)}$.
\end{lemma}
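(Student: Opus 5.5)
The plan is to write the smoothed distribution $\widetilde p$ in closed form, so that $\Delta_{\mathrm{KL}}(p\|\widetilde p)$ can be computed exactly and then bounded by one elementary inequality.

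\textbf{Step 1: closed form for $\widetilde p$.} By definition, $\widetilde{rp}(y)=\beta\, rp(y)=\beta\, p(y)/tp$ for $y\in A$, and $\widetilde{rp}(y)=\beta\, rb(y)=\beta\, b(y)/tb$ for $y\in\bar A$. The probability distribution induced by an RPD is obtained by normalizing it to sum to one. The common factor $\beta$ cancels in this normalization, and so does a further overall factor $1/tp$. This gives
\[
\widetilde p(y)=\frac{1}{Z}\begin{cases} p(y), & y\in A,\\ \frac{tp}{tb}\, b(y), & y\in\bar A,\end{cases}
\qquad Z:=\sum_{y\in A}p(y)+\frac{tp}{tb}\sum_{y\in\bar A} b(y).
\]
Since $\sum_{y\in A}p(y)\le 1$, it follows immediately that $Z\le 1+\epsilon$, with $\epsilon$ exactly as defined in the statement.

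\textbf{Step 2: the KL divergence.} Expand $\Delta_{\mathrm{KL}}(p\|\widetilde p)=\sum_y p(y)\log\frac{p(y)}{\widetilde p(y)}$ and split the sum over $A$ and $\bar A$.
\begin{itemize}
\item On $A$, each log-ratio equals $\log Z$.
\item On $\bar A$, each log-ratio equals $\log Z+\log\frac{p(y)\,tb}{tp\, b(y)}=\log Z+\log\frac{rp(y)}{rb(y)}$.
\end{itemize}
Summing with weights $p(y)$, which total $1$, yields the exact identity
\[
\Delta_{\mathrm{KL}}(p\|\widetilde p)=\log Z+\sum_{y\in\bar A}p(y)\log\frac{rp(y)}{rb(y)}.
\]
Bounding $\log Z\le\log(1+\epsilon)$ by Step 1 then gives the claimed inequality. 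The TV statement follows by directly applying Pinsker's inequality.

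\textbf{Main obstacle.} The only delicate point is Step 1: making sure the conversion from the RPD $\widetilde{rp}$ back to a probability distribution is handled correctly. In particular, one must check that $\beta$ and the typical probabilities cancel as claimed, so that the ratio $tp/tb$ appears only on the smoothed tokens. Once the closed form is written down, the rest is a one-line computation.

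Note that the second term is just the sum of $\mathrm{score}_{p,b}$ over the smoothed tokens. This is what justifies selecting $A$ as the top-$m$ tokens by score.
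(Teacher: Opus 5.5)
Your proof is correct and follows the same route as the paper. Your normalizer $Z$ is exactly $tp/(\beta\,\widetilde{tp})$ in the paper's notation; the paper likewise uses $\sum_{y\in A}p(y)\le 1$ to get $\beta\,\widetilde{tp}\ge tp/(1+\epsilon)$ and then splits the KL sum over $A$ and $\bar A$, the only difference being that you keep the exact identity $\Delta_{\mathrm{KL}}(p\|\widetilde p)=\log Z+\sum_{y\in\bar A}p(y)\log\frac{rp(y)}{rb(y)}$ before bounding $\log Z$, whereas the paper applies the bound token by token.
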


Proof is provided in Appendix~\ref{app:proof_smooth_kl_bound}.

\begin{remark}
The smoothing operator selects the set $A$ as
\[
A \in \arg\max_{|A|=m}\;\sum_{y\in A} p(y)\log\frac{rp(y)}{rb(y)}.
\]
Equivalently, since $\sum_y p(y)\log\frac{rp(y)}{rb(y)}$ is constant with respect to $A$, this is the same as
\[
A \in \arg\min_{|A|=m}\;\sum_{y\in \bar A} p(y)\log\frac{rp(y)}{rb(y)}.
\]
Thus, $A$ is the optimal choice of size $m$ for minimizing the upper bound on
$\Delta_{\mathrm{KL}}(p\|\widetilde p)$ given in Lemma~\ref{lem:smooth_kl_bound}.
\end{remark}

\begin{remark}
Since $p$ is a fine-tuned model and $b$ is a base model, we typically expect $tp \lesssim tb$ (more probability spikes implies lower typical probability).
There are two common regimes in which $\epsilon \ll 1$.

First, if $\sum_{y\in \bar A} b(y) \ll 1$, which occurs when $b$ is close to $p$ and the unsmoothed $m$ tokens are sufficient to capture most of the probability mass.

Second, if $tp \ll tb$, which occurs when $p$ exhibits probability spikes that concentrate mass on a small number of tokens, thereby reducing the mass assigned to typical tokens.
\end{remark}

\section{Proofs}
\label{app:proofs}

\subsection{Proof of Theorem~\ref{theorem:rp_square_log}}
\label{app:proof-thm-rp_square_log}

\begin{proof}
Let $a_y = \log\frac{p(y)}{q(y)}$. Then
\[
\sum_y \log^2\!\left(C\frac{p(y)}{q(y)}\right)
= \sum_y (\log C + a_y)^2.
\]
It is well known that the minimizer of $\sum_i (x + a_i)^2$
over $x$ is $x = -\frac{1}{n}\sum_i a_i$.
Applying this yields
\[
\log C = -\frac{1}{n}\sum_y \log\frac{p(y)}{q(y)}
= \log\frac{tq}{tp},
\]
and therefore $C = \frac{tq}{tp}$.
\end{proof}

\subsection{Proof of Proposition~\ref{prop:scp_dr_bound}}
\label{app:proof-prop-scp_dr_bound}
\begin{proof}
Let $A\subseteq Y$ satisfy $|A|=m$ and $p(y)=\alpha q(y)$ for all $y\in Y\setminus A$, with $\alpha > 0$.
For $y\notin A$ we have $\log q(y)-\log p(y)=-\log\alpha$, and therefore
\[
\log\frac{\alpha tq}{tp}
= \log\alpha + \frac{1}{n}\sum_{y\in Y}\bigl(\log q(y)-\log p(y)\bigr)
= \frac{m}{n}\log\alpha + \frac{1}{n}\sum_{y\in A}\bigl(\log q(y)-\log p(y)\bigr).
\]
Hence, using $\bigl|\log \tfrac{p(y)}{q(y)}\bigr|\le \log M$ for all $y\in A$,
\[
\left|\log\frac{\alpha tq}{tp}\right|
\le \frac{m}{n}|\log\alpha| + \frac{1}{n}\sum_{y\in A}\left|\log\frac{q(y)}{p(y)}\right|
\le \frac{m}{n}\bigl(|\log\alpha| + \log M\bigr).
\]

Next,
\[
D_r(p,q)=\frac{1}{2n}\sum_{y\in Y}\left|\log\frac{p(y)}{\alpha q(y)}+\log\frac{\alpha tq}{tp}\right|
\le \frac{1}{2n}\sum_{y\in Y}\left|\log\frac{p(y)}{\alpha q(y)}\right|
+ \frac{1}{2}\left|\log\frac{\alpha tq}{tp}\right|.
\]
For $y\notin A$, $\log\frac{p(y)}{\alpha q(y)}=0$, and for $y\in A$,
\[
\left|\log\frac{p(y)}{\alpha q(y)}\right|
= \left|\log\frac{p(y)}{q(y)} - \log\alpha\right|
\le \left|\log\frac{p(y)}{q(y)}\right| + |\log\alpha|
\le \log M + |\log\alpha|.
\]
Thus,
\[
\sum_{y\in Y}\left|\log\frac{p(y)}{\alpha q(y)}\right|
= \sum_{y\in A}\left|\log\frac{p(y)}{\alpha q(y)}\right|
\le m \big(\log M + |\log\alpha|\big).
\]
Combining the bounds yields
\[
D_r(p,q)\le \frac{m}{2n}\big(\log M + |\log\alpha|\big)
+ \frac{1}{2}\cdot\frac{m}{n}\big(\log M + |\log\alpha|\big)
= \frac{m}{n}\big(\log M + |\log\alpha|\big).
\]
\end{proof}

\subsection{Proof of Remark~\ref{remark:2k_sparps}}
\label{app:proof-rem-2k_sparps}

\begin{proof}
Let $sp = \mathrm{S}_m(rp, rb)$ and $sq = \mathrm{S}_m(rq, rb)$.
Let $A_p \subseteq Y$ denote the set of indices whose outputs in $sp$ originate from $rp$,
and define $A_q \subseteq Y$ analogously for $sq$. By construction of the smoothing operator,
$|A_p| = |A_q| = m$.

For all $y \in Y \setminus A_p$, there exists a constant $\alpha_p > 0$ such that
$sp(y) = \alpha_p\, rb(y)$. Similarly, for all $y \in Y \setminus A_q$,
there exists a constant $\alpha_q > 0$ such that
$sq(y) = \alpha_q\, rb(y)$.
Therefore, for all $y \in Y \setminus (A_p \cup A_q)$,
\[
sp(y) = \alpha_p\, rb(y) = \frac{\alpha_p}{\alpha_q}\, sq(y).
\]

Let $A := A_p \cup A_q$. Since $|A|\le 2m$, if $|A| < 2m$ we enlarge $A$ arbitrarily
to a superset $A' \supseteq A$ with $|A'| = 2m$.
The proportionality above continues to hold for all $y \in Y \setminus A'$.
Hence $sp$ is $2m$-SpaRPS with respect to $sq$.
\end{proof}

\subsection{Proof of Theorem~\ref{theorem:tde_q}}
\label{app:proof-thm-tde-q}.

\begin{proof}
Let $rr_1:=\mathrm{RPD}(r_1)$ and $rq:=\mathrm{RPD}(q)$, i.e.,
$rr_1(y| x)=r_1(y| x)/tr_1(x)$ and $rq(y| x)=q(y| x)/tq(x)$.
Then for any $y$,
\[
\frac{r_1(y| x)}{q(y| x)}
= \frac{rr_1(y| x)}{rq(y| x)}\cdot\frac{tr_1(x)}{tq(x)}
\le e^{k_x}\cdot\frac{tr_1(x)}{tq(x)}.
\]

\end{proof}

\subsection{Proof of Theorem~\ref{theorem:scp_tde_tight}}
\label{app:proof-thm-scp_tde_tight}

\begin{proof}
By definition of $\mathrm{CP}\text{-}\Delta_r$, for $i\in\{0,1\}$,
\[
r_i(y)=\frac{\min(rp_i(y),rq(y))}{Z_i},
\qquad
Z_i=\sum_{y}\min(rp_i(y),rq(y)).
\]
Hence,
\[
\frac{r_1(y)}{r_0(y)}
= \frac{\min(rp_1(y),rq(y))}{\min(rp_0(y),rq(y))}
\cdot \frac{Z_0}{Z_1}.
\]
Since $\min(rp_1(y),rq(y))\le rq(y)$,
\[
\frac{r_1(y)}{r_0(y)}
\le \frac{rq(y)}{\min(rp_0(y),rq(y))}
\cdot \frac{Z_0}{Z_1}
= \max\!\left(1,\, \frac{rq(y)}{rp_0(y)}\right)\cdot \frac{Z_0}{Z_1}.
\]

Let $rr_i := \mathrm{RPD}(r_i)=r_i/tr_i$. Then
\[
rr_i(y)=\frac{r_i(y)}{tr_i}=\frac{\min(rp_i(y),rq(y))}{Z_i\,tr_i},
\]
so $\frac{rr_i(y)}{\min(rp_i(y),rq(y))}=\frac{1}{Z_i tr_i}$ is constant in $y$.
By the tight form of Theorem~\ref{thm:cp_dr} (i.e., the maximum is attained as an equality),
\[
\max_y \frac{rr_i(y)}{\min(rp_i(y),rq(y))} = e^{D_r(p_i,q)},
\]
and therefore (since the ratio is constant in $y$)
\[
\frac{1}{Z_i tr_i}=e^{D_r(p_i,q)} \qquad\Longrightarrow\qquad Z_i=\frac{e^{-D_r(p_i,q)}}{tr_i}.
\]
Thus,
\[
\frac{Z_0}{Z_1}=\frac{tr_1}{tr_0}\cdot e^{D_r(p_1,q)-D_r(p_0,q)}
\le \frac{tr_1}{tr_0}\cdot e^{k_x},
\]
where $k_x:=D_r(p_1,q)$ and $D_r(p_0,q)\ge 0$.
Substituting back completes the proof.
\end{proof}

\subsection{Proof of Proposition~\ref{prop:scp-improved-bound}}
\label{app:proof-prop-scp-improved-bound}
\begin{proof}
Let $b$ be the base model and let $\widetilde{rq}=\mathrm{S}_m(rq,rb)$ and
$\widetilde{rp_0}=\mathrm{S}_m(rp_0,rb)$ denote the smoothed RPDs.
By assumption, $y$ is smoothed in both, hence
\[
\widetilde{rq}(y| x)=\beta_q(x)\, rb(y| x),
\qquad
\widetilde{rp_0}(y| x)=\beta_0(x)\, rb(y| x),
\]
for some normalization constants $\beta_q(x),\beta_0(x)>0$ chosen so that the
resulting distributions are valid RPDs (i.e., have zero mean log).

Let $A_q(x)$ (resp.\ $A_0(x)$) be the unsmoothed token set of size $m$ for $q$
(resp.\ $p_0$) at input $x$. From the RPD normalization constraint
$\sum_{z\in Y}\log \widetilde{rq}(z| x)=0$ and $\sum_{z\in Y}\log rb(z| x)=0$,
we obtain
\[
n\log \beta_q(x)=\sum_{z\in A_q(x)}\Big(\log rb(z| x)-\log rq(z| x)\Big),
\]
and similarly,
\[
n\log \beta_0(x)=\sum_{z\in A_0(x)}\Big(\log rb(z| x)-\log rp_0(z| x)\Big),
\]
where $n:=|Y|$.

Assume a bounded log-ratio condition on the unsmoothed tokens, namely that
for all relevant $x$ and all $z\in A_q(x)$,
$|\log rb(z| x)-\log rq(z| x)|\le B$, and analogously for $A_0(x)$.
Then
\[
|\log \beta_q(x)| \le \frac{mB}{n},
\qquad
|\log \beta_0(x)| \le \frac{mB}{n}.
\]
Therefore,
\[
\frac{\widetilde{rq}(y| x)}{\widetilde{rp_0}(y| x)}
=\frac{\beta_q(x)\, rb(y| x)}{\beta_0(x)\, rb(y| x)}
=\exp\!\Big(\log \beta_q(x)-\log \beta_0(x)\Big)
\le \exp\!\Big(|\log \beta_q(x)|+|\log \beta_0(x)|\Big)
\le \exp\!\Big(\tfrac{2mB}{n}\Big).
\]
Note that since all probabilities are lower bounded by $e^{-20}$, all typical probabilities are also lower bounded by $e^{-20}$. Consequently, all relative probabilities lie in the range $[e^{-20},, e^{20}]$, implying $B = 40$.
Plugging in $m=10$ and $n=32000$ gives
$\exp(2mB/n)=\exp(800/32000) < 1.026$, completing the proof.
\end{proof}

\subsection{Proof of Lemma~\ref{lem:cp_d_r_util_bound}}
\label{app:proof_cp_d_r_util_bound}

\begin{proof}
For distributions $u,v$, $TV(u,v)=1-\sum_y \min(u(y),v(y))$.
Write
\[
r(y)=\frac{\min\!\left(\frac{p(y)}{tp},\frac{q(y)}{tq}\right)}{z}
=\min\!\left(\frac{p(y)}{z tp},\frac{q(y)}{z tq}\right),
\quad
z=\sum_y \min\!\left(\frac{p(y)}{tp},\frac{q(y)}{tq}\right).
\]

Note that
\[
z tp=\sum_y \min\!\left(p(y),q(y)\frac{tp}{tq}\right)\le \sum_y p(y)=1,
\qquad
z tq=\sum_y \min\!\left(p(y)\frac{tq}{tp},q(y)\right)\le \sum_y q(y)=1.
\]
Hence $\frac{p(y)}{z tp}\ge p(y)$ and $\frac{q(y)}{z tq}\ge q(y)$ for all $y$.
Let $A_y=\frac{p(y)}{z tp}$ and $B_y=\frac{q(y)}{z tq}$. Then $r(y)=\min(A_y,B_y)$ and $B_y\ge q(y)$, so
\[
\min(r(y),q(y))=\min(A_y,B_y,q(y))=\min(A_y,q(y))\ge \min(p(y),q(y)).
\]
Summing over $y$ yields $\sum_y \min(r(y),q(y))\ge \sum_y \min(p(y),q(y))$, hence
$TV(r,q)\le TV(p,q)$.
\end{proof}

\subsection{Proof of Lemma~\ref{lem:smooth_kl_bound}}
\label{app:proof_smooth_kl_bound}

\begin{proof}
Let $rp(y)=p(y)/tp$ and $rb(y)=b(y)/tb$ with $tp,tb>0$. Let $A$ be a set of $m$ ``unsmoothed'' tokens and $\bar A$ its complement. Define the smoothed RPD
\[
\widetilde{rp}(y)=\beta
\begin{cases}
rp(y) & y\in A,\\
rb(y) & y\in \bar A,
\end{cases}
\]
where $\beta>0$ normalizes $\widetilde{rp}$ into an RPD. After converting back to probabilities:
$\widetilde p(y)=\widetilde{rp}(y)\,\widetilde tp$ for some $\widetilde tp>0$ (so that $\sum_y \widetilde p(y)=1$).
Let $\epsilon=\frac{tp}{tb}\sum_{y\in\bar A} b(y)$.
Since $\sum_y \widetilde p(y)=1$ and $\widetilde p(y)=\beta \widetilde tp\,rp(y)$ for $y\in A$ and
$\widetilde p(y)=\beta \widetilde tp\,rb(y)$ for $y\in\bar A$, we have
\[
1=\beta \widetilde tp\Big(\sum_{y\in A} rp(y)+\sum_{y\in\bar A} rb(y)\Big)
\quad\Rightarrow\quad
\frac{1}{\beta \widetilde tp}=\sum_{y\in A} rp(y)+\sum_{y\in\bar A} rb(y).
\]
Rewrite the RHS using $rp=p/tp$ and $rb=b/tb$:
\[
\frac{1}{\beta \widetilde tp}
=\frac{1}{tp}\sum_{y\in A} p(y)+\frac{1}{tb}\sum_{y\in\bar A} b(y)
\le \frac{1}{tp}\Big(1+\frac{tp}{tb}\sum_{y\in\bar A} b(y)\Big)
=\frac{1+\epsilon}{tp}.
\]
Hence $\beta \widetilde tp \ge \frac{tp}{1+\epsilon}$.

If $y\in A$ then
\[
\widetilde p(y)=\beta \widetilde tp\,rp(y)\ge \frac{tp}{1+\epsilon}\cdot \frac{p(y)}{tp}=\frac{p(y)}{1+\epsilon},
\]
so $\log\frac{p(y)}{\widetilde p(y)}\le \log(1+\epsilon)$.

If $y\in\bar A$ then
\[
\widetilde p(y)=\beta \widetilde tp\,rb(y)\ge \frac{tp}{1+\epsilon}\cdot \frac{b(y)}{tb},
\]
so
\[
\frac{p(y)}{\widetilde p(y)}
\le (1+\epsilon)\frac{p(y)tb}{b(y)tp}
=(1+\epsilon)\frac{rp(y)}{rb(y)},
\]
and thus $\log\frac{p(y)}{\widetilde p(y)}\le \log(1+\epsilon)+\log\frac{rp(y)}{rb(y)}$.

Summing over $A$ and $\bar A$ yields
\[
\Delta_{\mathrm{KL}}(p\|\widetilde p)\le \log(1+\epsilon)+\sum_{y\in\bar A}p(y)\log\frac{rp(y)}{rb(y)}.
\]
Pinsker gives the stated TV bound.
\end{proof}

\section{Further discussions}
\subsection{A TDE Optimal Strategy}
\label{app:tde}
The likelihood-ratio attack  (LiRA) has been shown to be optimal for membership inference (MI) under a set of standard assumptions. We state these assumptions informally below; for precise formulations and proofs, see~\cite{pmlr-v97-sablayrolles19a}:

\begin{enumerate}
    \item The distribution over model parameters $\theta$ is assumed to follow a fixed likelihood model determined by the training loss induced by $\theta$.
    \item The reference model $r_0$ is trained using the same procedure and on data drawn from a distribution similar to that of $r_1$.
    \item Candidate members $(x_i,y_i)$ are drawn i.i.d. from the data distribution, each having probability $\lambda$ of being a member of the training set.
\end{enumerate}

While Assumption~3 does not generally hold for all TDE attacks, LiRA can nevertheless remain relevant even when this assumption is violated; see Appendix~\ref{app:lira_propo}.

\subsection{Relevance of LiRA to Our Attacks}
\label{app:lira_propo}

Our attacks do not explicitly rely on a reference model and are therefore potentially weaker than the optimal likelihood-ratio attack (LiRA) analyzed in our theory. Nevertheless, LiRA remains closely connected to our attack mechanisms. Below, we clarify this connection for each attack setting.

\paragraph{Canary method.}
Our canaries consist of sequences of three random words.
An ideal reference model would assign nearly uniform probability across such sequences, substantially limiting the usefulness of an explicit reference model.
In this case, LiRA effectively reduces to thresholding the target model’s output probabilities, making reference-model-free attacks comparable in strength.

\paragraph{PII extraction.}
For PII extraction, an ideal reference model would assign uniform probability to all digits.
In this case $r_0(y)$ is constant over candidate digits, so the likelihood ratio
$\frac{r_1(y| x)}{r_0(y| x)}$ differs from $r_1(y| x)$ only by a constant scaling.
Thus, using LiRA is equivalent to thresholding $r_1(y| x)$ after adjusting the decision threshold.

\paragraph{Token-by-token extraction (TTE).}
In TTE, candidate tokens are not drawn from a fixed pool but arise naturally from the language model’s generation process.
As a result, some of the assumptions under which LiRA is optimal for the membership inference stage do not strictly hold, in particular Assumption~3 in Appendix~\ref{app:tde}.
For example, given the prefix ``12345,'' the next token ``6'' is highly likely due to language structure alone, even if the likelihood ratio between the target and reference models is small.
In such cases, thresholding the target model probability can be more appropriate than directly applying LiRA. Nevertheless, as discussed in Section~\ref{subsec:tde_prot}, the likelihood ratio still quantifies the information gain an adversary obtains from observing the target model’s output.
Bounding this ratio therefore constrains the adversary’s advantage in the candidate generation stage, which makes LiRA-based analysis relevant for TTE as well.

\subsection{Choice of Smoothing Parameter $m$}
\label{app:m_choice}
We did not perform hyperparameter tuning to select the smoothing parameter $m$.
Instead, we fix $m=10$ throughout all experiments except in the ablation study
that examines the effect of varying $m$.
This choice reflects a trade-off between expressiveness and robustness:
on the one hand, $m$ is small relative to the vocabulary size $n$, which is typically on the order of tens of thousands of tokens;
on the other hand, retaining ten tokens appears sufficient to preserve the dominant contributors to the model utility in practice, as validated by our utility metric.

While performance may be further improved by tuning $m$ for a specific dataset or training regime, our results suggests that $m=10$ serves as a reasonable default that does not rely on dataset-specific or training-specific details.

\subsection{Theoretical Analysis of SCP-$\Delta_r$-b vs.\ SCP-$\Delta_r$-c}
\label{app:scp_b_vs_c}

As shown in Table~\ref{tab:main}, SCP-$\Delta_r$-b and SCP-$\Delta_r$-c achieve comparable performance on the canary method and TTE, while SCP-$\Delta_r$-b provides stronger protection on PII extraction.
This difference can be explained by Proposition~\ref{prop:scp-improved-bound}.

By the proposition, SCP-$\Delta_r$ achieves its strongest advantage over CP-$\Delta_r$ on tokens that are smoothed in both $\widetilde{rq}$ and $\widetilde{rp_0}$.
In the PII setting, the tokens of interest are the digits $0$-$9$.
Recall that the smoothing operator retains the $m$ tokens with the highest relative information gain relative to the base model and smooths the remaining tokens.

For SCP-$\Delta_r$-b, the base model is the pre-fine-tuning LLM, which is context-aware.
As a result, for digit tokens $d$, it is common that $rq(d) \approx rb(d)$, leading to low relative information gain.
Consequently, digit tokens are likely to be smoothed, yielding stronger protection.

In contrast, SCP-$\Delta_r$-c uses a non-contextual base model that assigns a constant distribution.
Such a base model assigns relatively low probability to digit tokens compared to a context-aware distribution, resulting in high relative information gain for digits.
These tokens are therefore less likely to be smoothed, weakening protection on PII extraction.

Finally, note that this reasoning applies when the number of relevant tokens is on the order of or smaller than $m$.
In the canary setting, however, the number of candidate tokens is much larger than $m$, making the probability that any specific target token is smoothed high regardless of whether the base model is contextual.
As illustrated in Figure~\ref{fig:smooth_m}, increasing $m$ makes this effect apparent for the canary method as well.

\section{Additional Training parameters and Experiments}
\label{app:exps}

\subsection{Training Details}
\label{app:training}

We follow the training setup of \citet{abad2025copyrightprotected}.
Our default training hyperparameters are reported in Table~\ref{tab:training}.
We use a batch size of 1, since training examples are concatenated with an EOS (end-of-sequence) token up to a maximum sequence length of 2048.

The fine-tuning process requires adding a separator token and a padding token to the tokenizer.
We found that the default Hugging Face initialization for newly added tokens led to gradient explosions in our setting.
Instead, we initialize the embeddings of new tokens using the EOS token embedding, which we found to be stable.

The LoRA training parameters are reported in Table~\ref{tab:training}.
For LoRA fine-tuning, concatenating training examples dramatically reduced memorization in the constituent models, preventing meaningful evaluation.
We therefore use a sequence length of 25, corresponding to the maximum length of examples in the \emph{Names2IDs} dataset (shorter examples are padded with EOS).
Additionally, we observed that adding new tokens interacts poorly with LoRA fine-tuning; as a result, we removed the separator token from the dataset and did not introduce new tokens into the tokenizer.
Finally, training for 50 epochs led to gradient instability in some runs, so we reduced the number of epochs to 20, which still resulted in substantial memorization.

\begin{table}[]
\caption{Default training parameters for regular fine-tuning and LoRA.}
\centering
\begin{tabular}{|l|l|l|}
\hline
\textbf{Parameter}            & \textbf{Regular Fine-Tuning Default} & \textbf{LoRA Default}                                                                                                        \\ \hline
\#train sample per partition  & 3000                                 & 3000                                                                                                                         \\ \hline
epochs                        & 50                                   & 20                                                                                                                           \\ \hline
batch\_size                   & 1                                    & 64                                                                                                                           \\ \hline
neftune\_noise\_alpha         & 5                                    & None                                                                                                                         \\ \hline
learning\_rate                & 5e-5                                 & 5e-4                                                                                                                         \\ \hline
gradient\_accumulation\_steps & 1                                    & 1                                                                                                                            \\ \hline
optim                         & adamw\_bnb\_8bit                     & adamw\_bnb\_8bit                                                                                                             \\ \hline
warmup\_steps                 & 50                                   & 50                                                                                                                           \\ \hline
max\_grad\_norm               & 2.0                                  & 2.0                                                                                                                          \\ \hline
lora\_r                       & -                                    & 64                                                                                                                           \\ \hline
lora\_alpha                   & -                                    & 32                                                                                                                           \\ \hline
lora\_dropout                 & -                                    & 0.05                                                                                                                         \\ \hline
target\_modules               & -                                    & \begin{tabular}[c]{@{}l@{}}q\_proj, k\_proj, v\_proj, \\ o\_proj, gate\_proj, up\_proj, \\ down\_proj, lm\_head\end{tabular} \\ \hline
\end{tabular}
\label{tab:training}
\end{table}

\subsection{$k_x$ Bounds}
Table~\ref{tab:kx} reports the $99^{\text{th}}$ percentile of the $k_x$ bound on the \emph{MathAbstracts}, \emph{StoryTelling}, and \emph{CodeInstructions} datasets.
SCP-$\Delta_r$ achieves substantially smaller $k_x$ values than CP-$\Delta_r$, which in turn outperforms CP-$\Delta$.
The difference between SCP-$\Delta_r$-b and SCP-$\Delta_r$-c is minor, with a slight advantage for SCP-$\Delta_r$-b.

\begin{table}[]
\caption{$99^{\text{th}}$ percentile $k_x$ values for the \emph{MathAbstracts} (M), \emph{StoryTelling} (S) and \emph{CodeInstructions} (C) datasets for different CP-style algorithms.}
\small
\centering
\begin{tabular}{l|lll|}
\cline{2-4}
\multirow{2}{*}{}                      & \multicolumn{3}{c|}{$k_x \ (99\%)\downarrow$}                          \\
                                       & \multicolumn{1}{c}{M} & \multicolumn{1}{c}{S} & \multicolumn{1}{c|}{C} \\ \hline
\multicolumn{1}{|l|}{CP-$\Delta$}      & 11.6                  & 12.6                  & 14.0                   \\ \hline
\multicolumn{1}{|l|}{CP-$\Delta_r$}    & .98                   & 1.58                  & .84                    \\ \hline
\multicolumn{1}{|l|}{SCP-$\Delta_r$-b} & \textbf{.004}         & \textbf{.005}         & \textbf{.003}          \\ \hline
\multicolumn{1}{|l|}{SCP-$\Delta_r$-c} & {\ul .005}            & {\ul .008}            & {\ul .004}             \\ \hline
\end{tabular}
% \end{small}
\label{tab:kx}
\end{table}

\begin{figure}[]
  \centering
  \begin{subfigure}{0.24\linewidth}
    \centering
    \includegraphics[width=\linewidth]{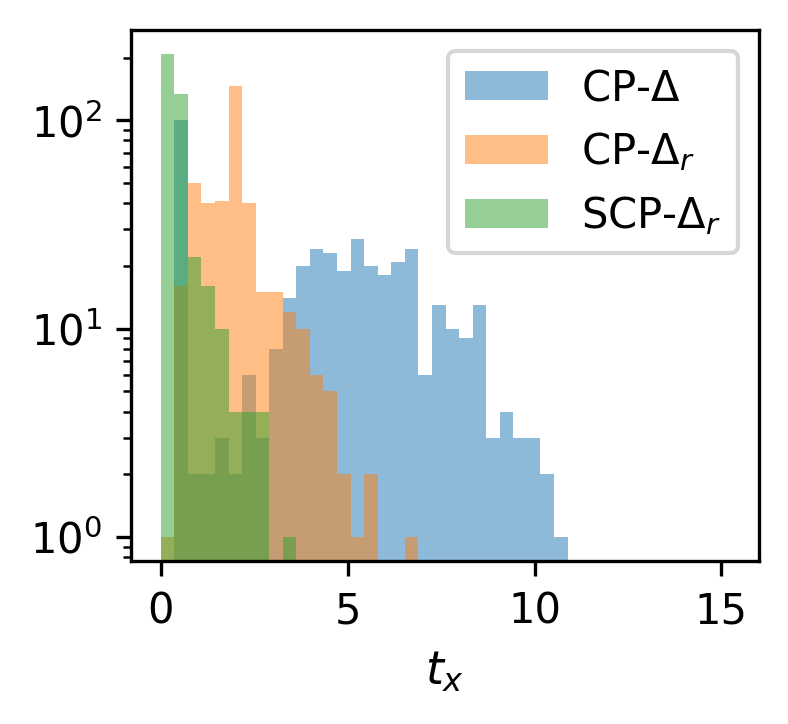}
    \caption{Rare tokens extraction}
  \end{subfigure}
  \begin{subfigure}{0.24\linewidth}
    \centering
    \includegraphics[width=\linewidth]{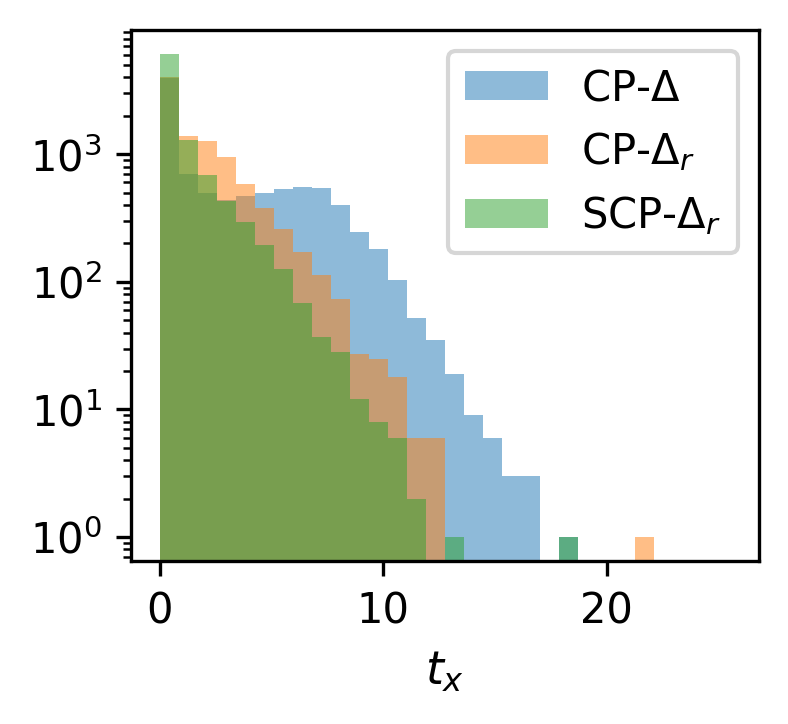}
    \caption{Typical tokens extraction}
  \end{subfigure}
  \caption{Log-scale histograms of Theorem \ref{theorem:tde_q} bound $t_x$. (a) Extraction of rare tokens (Canaries, see~\ref{subsec:tde_naf}). (b) Extraction of tokens from the \emph{MathAbstracts} dataset (see~\ref{sub:experimental}).}
  \label{fig:tx_bound}
\end{figure}

\subsection{$t_x$ Bounds}
\label{app:tx_bound}
Figure~\ref{fig:tx_bound} presents empirical histograms of the $t_x$ bounds (Theorem~\ref{theorem:tde_q}) for different CP-style algorithms.
SCP-$\Delta_r$ achieves the smallest $t_x$ bounds, followed by CP-$\Delta_r$.

\subsection{Illustration of Proposition~\ref{prop:scp-improved-bound}}
\label{app:prop_illu}

Proposition~\ref{prop:scp-improved-bound} implies that the bound $v_x(y)$ from
Theorem~\ref{theorem:scp_tde_tight} improves when the target token $y$ is smoothed.
To illustrate this effect, we conduct the following synthetic experiment.

We fix a target token $y_s$ and initialize distributions $p_0$, $p_1$, $q$, and
the base model distribution $b$.
We then gradually increase $p_1(y_s)$ and measure the resulting bound
$v_x(y_s)$ for CP-$\Delta$, CP-$\Delta_r$, and SCP-$\Delta_r$.
Different initialization schemes are used to control the likelihood that $y_s$
is smoothed.
We also include a simple baseline where all distributions are initialized to be
uniform (Figure~\ref{fig:bounds_vs_p}), in which case CP-$\Delta_r$ and
SCP-$\Delta_r$ coincide.

For the non-uniform settings, we first sample a shared logits vector
$w \sim \mathcal{N}(0,1)$ to induce correlation between models, and then initialize
each distribution $p \in \{p_0, p_1, q, b\}$ as
$p = \mathrm{Softmax}(w + \mathcal{N}(0,1))$.
We then modify $q(y_s)$ to control whether $y_s$ is smoothed:

\textbf{Smoothed} (Figure~\ref{fig:bounds_vs_p_noise_s}):
$q(y_s)$ is left unchanged.
Since $m \ll n$, the probability that $y_s$ is smoothed is approximately
$(n-m)/n$, which is close to $1$.

\textbf{Smoothed with probability $\approx 0.5$}
(Figure~\ref{fig:bounds_vs_p_noise_may_s}):
We multiply $q(y_s)$ by $4.5$, which empirically results in $y_s$ being smoothed
with probability close to $0.5$ for $m=10$ and $n=32000$.

\textbf{Not smoothed} (Figure~\ref{fig:bounds_vs_p_noise_no_s}):
We set $q(y_s)=\max_y q(y)$, making $y_s$ among the highest relative information-gain
tokens and therefore very unlikely to be smoothed.

As shown in Figure~\ref{fig:prop_bounds}, CP-$\Delta_r$ consistently achieves
smaller bounds than CP-$\Delta$, and SCP-$\Delta_r$ further improves upon
CP-$\Delta_r$.
Moreover, the improvement of SCP-$\Delta_r$ becomes more pronounced as the
likelihood of $y_s$ being smoothed increases, in accordance with
Proposition~\ref{prop:scp-improved-bound}. Error bars denote standard deviation (not standard error); results are averaged over 200 trials, and all differences between SCP-$\Delta_r$ and CP-$\Delta_r$ are statistically significant ($p < 10^{-4}$).

\begin{figure}[]
  \centering
  \begin{subfigure}{0.24\linewidth}
    \centering
    \includegraphics[width=\linewidth]{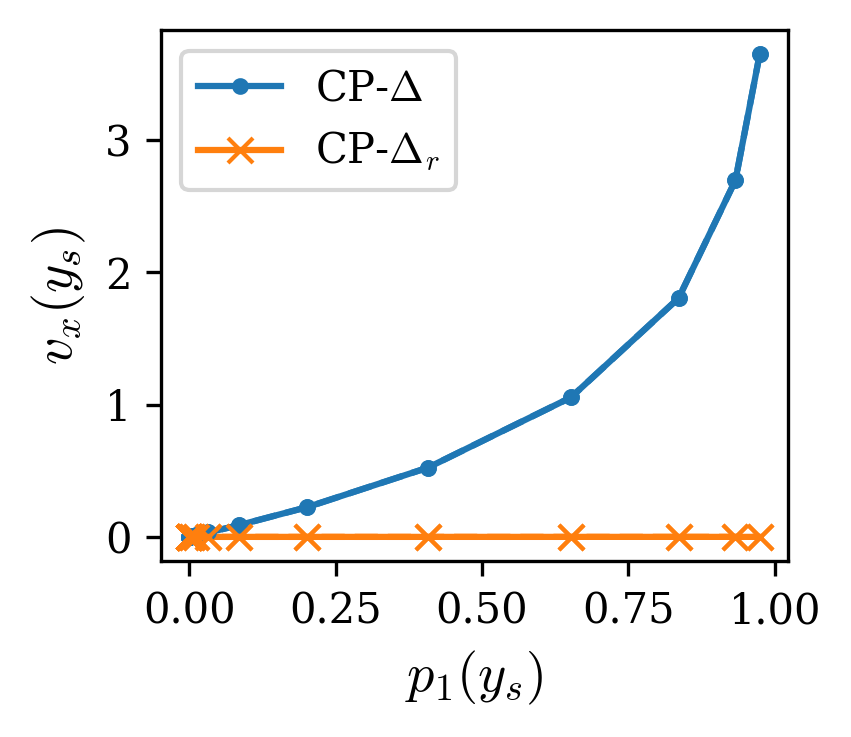}
    \caption{Uniform with spike at $y_s$}
    \label{fig:bounds_vs_p}
  \end{subfigure}
  \hfill
  \begin{subfigure}{0.24\linewidth}
    \centering
    \includegraphics[width=\linewidth]{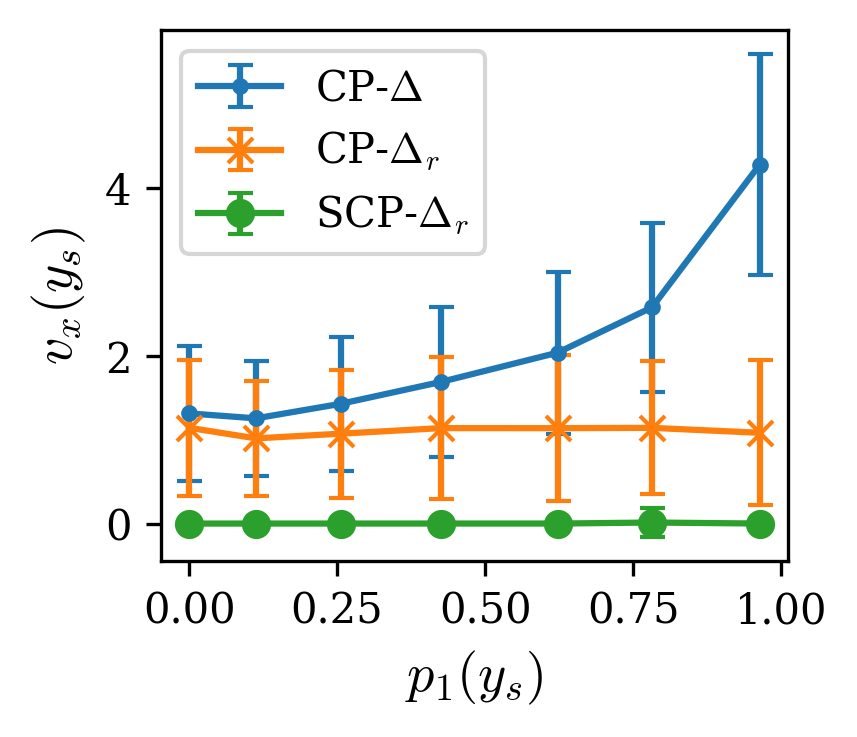}
    \caption{$y_s$ smoothed}
    \label{fig:bounds_vs_p_noise_s}
  \end{subfigure}
  \hfill
  \begin{subfigure}{0.24\linewidth}
    \centering
    \includegraphics[width=\linewidth]{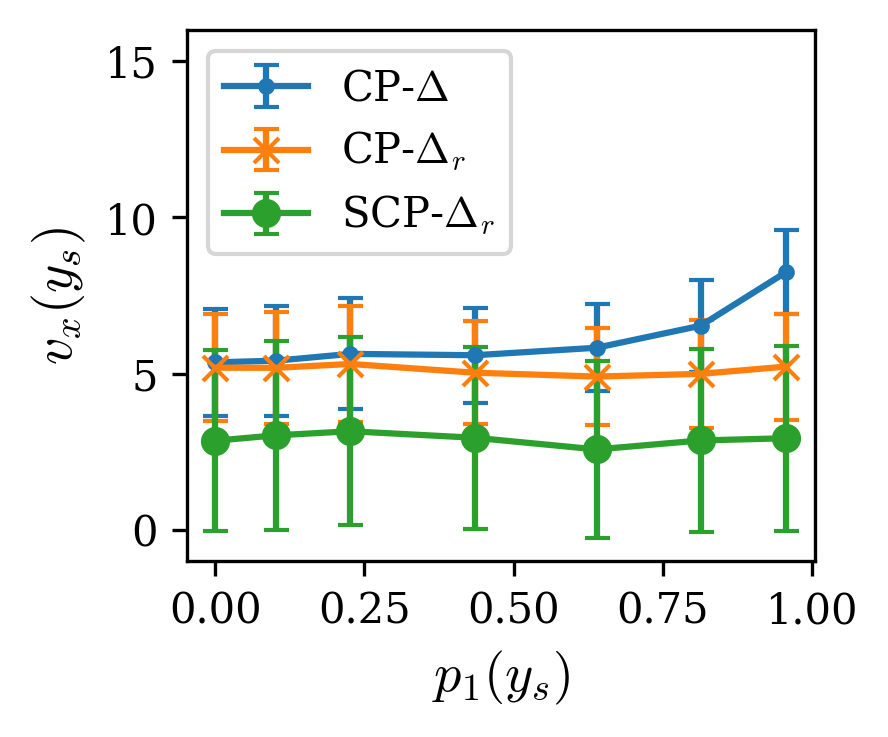}
    \caption{$y_s$ smoothed with prob.\ $\approx 0.5$}
    \label{fig:bounds_vs_p_noise_may_s}
  \end{subfigure}
  \hfill
  \begin{subfigure}{0.24\linewidth}
    \centering
    \includegraphics[width=\linewidth]{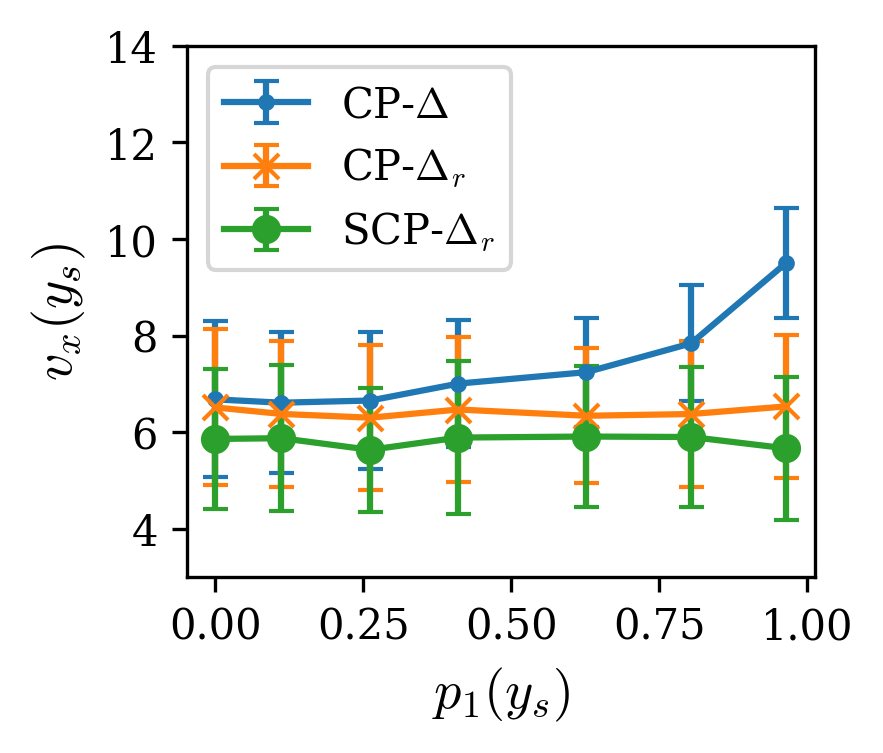}
    \caption{$y_s$ not smoothed}
    \label{fig:bounds_vs_p_noise_no_s}
  \end{subfigure}
  \caption{
    Theorem~\ref{theorem:scp_tde_tight} bound $v_x(y_s)$ as a function of $p_1(y_s)$
    under different smoothing conditions.
    Error bars denote one standard deviation.
    (a) $q$ and $p_0$ are uniform, while $p_1$ is uniform except for a spike at $y_s$.
    (b) $q$, $p_0$, $p_1$ and $b$ are randomly initialized, with $p_1$ additionally spiked at $y_s$.
    (c) Same as (b), but $q(y_s)$ is scaled so that $y_s$ is smoothed with probability
    approximately $0.5$.
    (d) Same as (b), but $q(y_s)=\max_y q(y)$, making $y_s$ very unlikely to be smoothed.
    }

  \label{fig:prop_bounds}
\end{figure}

\begin{table}[b]
\caption{Canary attack results. Best values are in bold. Underlined values are within two standard deviations of the random-canary baseline (i.e., not significantly above chance). Random-canary baseline: exposure computed for canaries of the same format that were not inserted into training.
}
\centering
\begin{tabular}{l|ll|ll|ll|}
\cline{2-7}
\multirow{2}{*}{}              & \multicolumn{2}{c|}{$\text{REP}=1$}        & \multicolumn{2}{c|}{$\text{REP}=3$}              & \multicolumn{2}{c|}{$\text{REP}=10$}             \\
                               & Mean $\downarrow$         & 95\% $\downarrow$              & Mean $\downarrow$              & 95\% $\downarrow$              & Mean $\downarrow$              & 95\% $\downarrow$              \\ \hline
\multicolumn{1}{|l|}{No CP}        & 24           & 28                 & 28                 & 31                 & 29                 & 33                 \\ \hline
\multicolumn{1}{|l|}{CP-$\Delta$}     & 5.2          & 13                 & 8.8                & 17                 & 11                 & 20                 \\ \hline
\multicolumn{1}{|l|}{CP-$\Delta_{KL}$}    & 16           & 25                 & 23                 & 33                 & 25                 & 33                 \\ \hline
\multicolumn{1}{|l|}{CP-Fuse}  & 2.7          & 5.5                & 3.8                & 7.1                & 4.2                & 7.2                \\ \hline
\multicolumn{1}{|l|}{CP-$\Delta_r$}    & 2.6          & 6.4                & 3.6                & 8                  & 3.8                & 8.9                \\ \hline
\multicolumn{1}{|l|}{SCP-$\Delta$-c} & {\ul \textbf{1.5}}          & {\ul \textbf{4.4}} & {\ul 1.6}                & {\ul 3.4}          & 1.8                & {\ul 4.1}          \\ \hline
\multicolumn{1}{|l|}{SCP-$\Delta$-b} & 1.7 & {\ul 4.8}          & {\ul \textbf{1.4}} & {\ul \textbf{3.2}} & {\ul \textbf{1.6}} & {\ul \textbf{3.7}} \\ \hline
\end{tabular}
\label{tab:canary}
\end{table}

\subsection{The Canary Method}
\label{app:canary}

Table~\ref{tab:canary} presents the full canary attack results on the \emph{MathAbstracts} dataset.
Results are qualitatively similar across different numbers of canary repetitions, with higher repetition leading to increased memorization for all methods except for both variations of SCP-$\Delta_r$.
Notably, both variations of SCP-$\Delta_r$ remain within two standard deviations of the random canary baseline in $5$ out of $6$ evaluated settings.

\subsection{PII Attacks on LoRA Fine-Tuned LLMs}
\label{app:pii_lora}
Table~\ref{tab:simp_lora} reports the average extracted length (AEL) for the PII attack on the \emph{Names2IDs} dataset when fine-tuning with LoRA for 20 epochs.
Results are averaged over five trials with different training examples.
Notably, LoRA did not prevent memorization, and these results are qualitatively similar to those obtained under non-LoRA fine-tuning (Table~\ref{tab:main}).

\begin{table}[]
\caption{PII average extraction length results with LoRA fine-tuning.}
\centering
\begin{tabular}{l|l|}
\cline{2-2}
                                       & AEL                      \\ \hline
\multicolumn{1}{|l|}{No CP}            & 9.79 $\pm$ 0.08          \\ \hline
\multicolumn{1}{|l|}{CP-$\Delta$}      & 2.6 $\pm$ 0.8            \\ \hline
\multicolumn{1}{|l|}{CP-$\Delta_{KL}$} & 5.0 $\pm$ 1.3            \\ \hline
\multicolumn{1}{|l|}{CP-Fuse}          & 5.0 $\pm$ 1.3            \\ \hline
\multicolumn{1}{|l|}{CP-$\Delta_r$}    & 0.9 $\pm$ 0.23           \\ \hline
\multicolumn{1}{|l|}{SCP-$\Delta$-c}   & {\ul 0.81} $\pm$ 0.07          \\ \hline
\multicolumn{1}{|l|}{SCP-$\Delta$-b}   & \textbf{0.39} $\pm$ 0.05 \\ \hline
\end{tabular}
\label{tab:simp_lora}
\end{table}

\subsection{Reasoning and Instruction-Following Evaluation}
\label{app:utility}

In addition to our empirical next-token accuracy evaluation and the KL-divergence bounds introduced by smoothing (Lemma~\ref{lem:smooth_kl_bound}), we evaluate the impact of smoothing on downstream capabilities. We measure reasoning performance on MMLU (5-shot)~\cite{hendrycks2020measuring} and instruction-following performance on IFEval~\cite{zhou2023instruction}.

For the MMLU evaluation, we use Llama-2-7b, while for the IFEval evaluation we use Llama-2-7b-chat. In both experiments, we set the smoothing parameter to $m=10$, as in our main experiments. Table~\ref{tab:smoothing_utility} reports the results.

The results are consistent with our next-token accuracy findings, demonstrating that smoothing preserves model utility.

\begin{table}[h]
\centering
\caption{Utility evaluation with and without smoothing.}
\label{tab:smoothing_utility}
\begin{tabular}{lcc}
\toprule
Configuration & MMLU Accuracy (5-shot) & IFEval Prompt-Level Accuracy \\
\midrule
No Smoothing & 45.83\% & 31.42\% \\
With Smoothing & 45.75\% & 31.61\% \\
\bottomrule
\end{tabular}
\end{table}
% \section{You \emph{can} have an appendix here.}

% You can have as much text here as you want. The main body must be at most $8$
% pages long. For the final version, one more page can be added. If you want, you
% can use an appendix like this one.

% The $\mathtt{\backslash onecolumn}$ command above can be kept in place if you
% prefer a one-column appendix, or can be removed if you prefer a two-column
% appendix.  Apart from this possible change, the style (font size, spacing,
% margins, page numbering, etc.) should be kept the same as the main body.
%%%%%%%%%%%%%%%%%%%%%%%%%%%%%%%%%%%%%%%%%%%%%%%%%%%%%%%%%%%%%%%%%%%%%%%%%%%%%%%
%%%%%%%%%%%%%%%%%%%%%%%%%%%%%%%%%%%%%%%%%%%%%%%%%%%%%%%%%%%%%%%%%%%%%%%%%%%%%%%

\end{document}